\documentclass[12pt]{article}
\usepackage[margin=0.8in]{geometry}
\usepackage{amsmath,amssymb,amsthm}
\usepackage{hyperref, cleveref}

\usepackage{arXiv}

\newcommand{\prob}{\mathbb{P}}

\newcommand{\rmd}{\mathop{}\!\mathrm{d}}

\title{Metropolis-Adjusted Diffusion Models}
\date{}

\author{Kevin H. Lam$^{1,}$\thanks{Authors contributed equally to this work.}  \quad Tyler Farghly$^{2,}$\footnotemark[1] \quad Christopher Williams$^{1,}$\footnotemark[1] \\
\quad Jun Yang$^3$ \quad Yee Whye Teh$^1$ \quad Arnaud Doucet$^1$\\\\
  $^1$Department of Statistics, University of Oxford\\
  $^2$Inria, CNRS, École Normale Sup\'erieure, PSL Research University\\
  $^3$Department of Mathematical Sciences, University of Copenhagen\\
{\normalsize \texttt{$\{$lam,williams,y.w.teh,doucet$\}$@stats.ox.ac.uk},
}\\
{\normalsize \texttt{tyler.farghly@inria.fr}, \texttt{jy@math.ku.dk}}
}
\begin{document}

\maketitle

\begin{abstract}
Sampling from score-based diffusion models incurs bias due to both time discretisation and the approximation of the score function. 
A common strategy for reducing this bias is to apply corrector steps based on the unadjusted Langevin algorithm (ULA) at each noise level within a predictor-corrector framework. 
However, ULA is itself a \emph{biased} sampler, as it discretises a continuous diffusion process. 
In this work, we consider \emph{adjusted} Langevin correctors that employ Metropolis--Hastings (MH) or Barker's accept-reject steps to correct for this bias.
Since the target density ratio typically required by MH-based algorithms is unavailable, we propose methods that instead utilise the score function to compute the correct acceptance probability. 
We introduce the first exact method for adjusting Langevin corrections in diffusion models, based on a two-coin Bernoulli factory algorithm. 
We also propose an efficient approximation based on Simpson's rule that achieves accuracy of order $5/2$ in the step size at near-zero marginal cost.
We demonstrate that these procedures improve sample quality on both synthetic and image datasets, yielding consistent gains in Fr\'echet Inception Distance (FID) on the latter.
\end{abstract}

\section{Introduction}
Diffusion models have become a key class of generative models, with state-of-the-art performance in domains such as image, audio, and protein design. They rely on a simple but powerful idea. Instead of modelling the data density directly, one learns its score along a noising path and then simulates a reverse-time diffusion process to transform noise into data \citep{sohl-dickstein2015dpm,song2019smld,ho2020ddpm,song2021sgm}. In practice, this reverse diffusion is implemented using numerical integrators \citep{song2021sgm, karras2022edm}. Consequently, even if the learned score were exact, the generated distribution would still be impacted by time-discretisation error. With a learned score, this error is compounded by approximation error in the score network.

This time-discretisation issue is common in Markov chain Monte Carlo (MCMC). To sample from a target distribution, the unadjusted Langevin algorithm (ULA), a discretisation of the Langevin diffusion, uses gradients of a log-density to explore the state space \citep{durmus2017nonasymptotic}, but this does not leave the target distribution invariant. The standard approach to correct for this discretisation bias is to add an accept--reject step, yielding the Metropolis-adjusted Langevin algorithm (MALA) \citep{besag1994comments,roberts1996exponential}. The analogy with diffusion sampling is immediate. Predictor--Corrector samplers \citep{song2021sgm} already use Langevin-type corrector steps at fixed noise levels: after a predictor step moves the sample between successive marginals, a corrector step refines the sample with respect to the current marginal. The usual corrector is again ULA. Thus, the ``corrector'' in the Predictor--Corrector scheme is itself biased. It may reduce the error introduced by the predictor, but it introduces its own discretisation error.

This motivates a natural question: can we replace ULA correctors in diffusion models by Metropolis-adjusted correctors? A direct application of MALA is not possible as it requires evaluating the density ratio $p_t(\widetilde{\mathbf{x}}) / p_t(\mathbf{x})$ between the proposal $\widetilde{\mathbf{x}}$ and the current state $\mathbf{x}$ for the intractable target marginal $p_t$ at noise level $t$, whereas diffusion models only provide a score estimate $s_\phi(\mathbf{x}, t) \approx \nabla_{\mathbf{x}} \log p_t(\mathbf{x})$.

\begin{figure}
    \centering
    \includegraphics[width=0.95\linewidth]{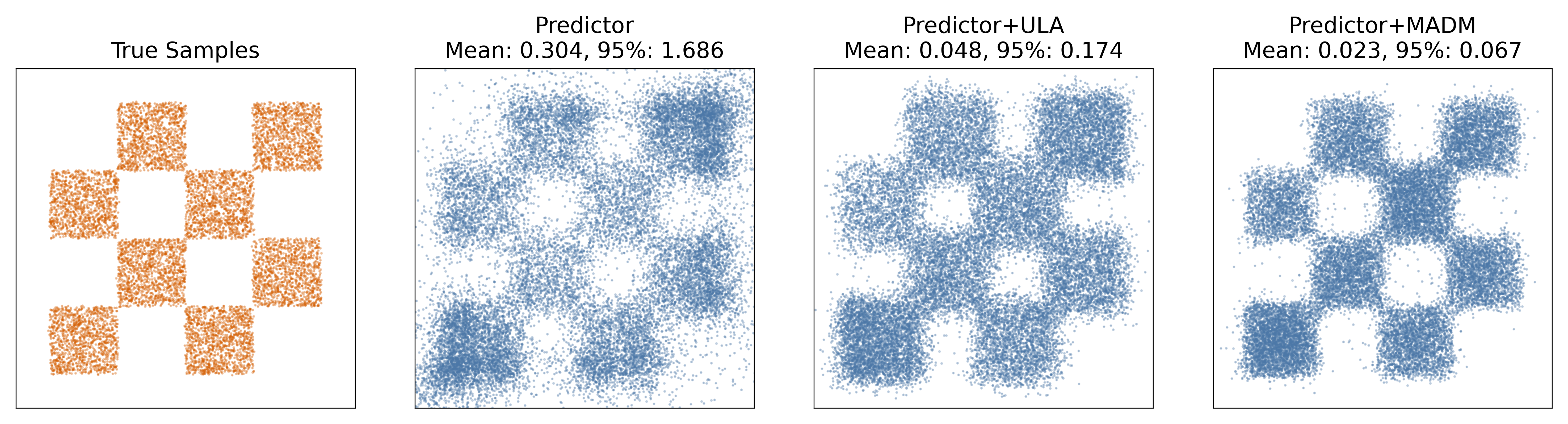}
    \caption{DDM sampling using a probability flow ODE predictor, ULA correction, and MADM correction. Mean distance to the true samples and the 95\% sample containment distance are shown. A coarse diffusion schedule and large ULA stepsize are chosen to show the discretisation bias, the same schedule and stepsize are used in MADM to show the bias correction from this method.}
    \label{fig:MADM-checkerboard}
\end{figure}

To address this problem, we propose \emph{Metropolis-Adjusted Diffusion Models} (MADM), a simple modification of the Predictor--Corrector framework where the usual ULA corrector is replaced by an accept--reject corrected Langevin step. The key point is that, although the density ratio is unavailable, its logarithm can be represented as a line integral of the score. 
This result shows that the score can be used not only to construct Langevin proposals, but also to estimate the acceptance probability that corrects their discretisation bias.

We develop two approaches to exploit this representation. First, we give an exact Barker adjusted corrector based on a two-coin Bernoulli factory \citep{Goncalves2017-lh}. Under the assumption that $s(\cdot,t)=\nabla \log p_t(\cdot)$ and that a suitable bound on the estimator of the line integral is available, this procedure produces an accept--reject mechanism with the Barker acceptance probability without ever evaluating $p_t$. This gives, to our knowledge, the first exact Metropolis-style Langevin corrector 
for any smooth target distribution only known through its score. Second, we introduce a practical approximation of the density ratio based on Simpson's rule. The resulting MH correction is no longer exact, but it is very cheap. The resulting approximation has high-order accuracy in the Langevin step size, is easy to implement, and only requires access to a pre-trained score function. 

Empirically, Metropolis adjustments address a failure mode of unadjusted correctors on toy examples; see Figure \ref{fig:MADM-checkerboard}. On image datasets, we combined a predictor with an MADM corrector based on Simpson's rule using pre-trained models from \citet{karras2022edm}. While gains are modest in some unconditional settings, we found that MADM yields consistent and systematic improvements in FID.

Our contributions are as follows:
\begin{itemize}
    \item We introduce MADM, a Metropolis-adjusted corrector framework for diffusion models that replaces biased ULA corrector steps.
    \item We propose an exact method based on the Barker acceptance rule and a two-coin Bernoulli factory, giving an adjusted corrector under assumptions. 
    \item We present a scalable Simpson-rule approximation that requires only one additional midpoint score evaluation per proposal.
    \item We demonstrate that MADM correctors reduce artifacts on synthetic datasets and consistently improve FID on standard image benchmarks.
\end{itemize}
\section{Background and Setup}
\subsection{Langevin Diffusion and Unadjusted Langevin Algorithm}\label{subsec:Langevin}
Consider a smooth distribution $p$, then a standard technique for sampling from $p$ is to simulate the Langevin diffusion
\begin{equation}
\rmd \mathbf{X}_t=\tfrac{g^2}{2}\nabla_{\mathbf{x}} \log p(\mathbf{X}_{t})\rmd t 
+ g \rmd \mathbf{B}_{t},
\end{equation}
for some parameters $g>0$. Whatever being the initial state $\mathbf{X}_0$, this diffusion is such that the law of $\mathbf{X}_t$ converges to $p$ as $t$ grows without bound. One cannot simulate exactly this nonlinear continuous-time diffusion process so a standard approach is to use an Euler--Maruyama integrator
\begin{equation}\label{eq:ULA}
\mathbf{X}_{kh}=\mathbf{X}_{(k-1)h}+h\tfrac{g^2}{2} \nabla_{\mathbf{x}}\log p(\mathbf{X}_{(k-1)h})+\sqrt{h} g \mathbf{Z}_k, \qquad \mathbf{Z}_k \sim \mathcal{N}(\mathbf{0}, \mathbf{I}),
\end{equation}
where $h>0$ is a step-size. This is known as the Unadjusted Langevin Algorithm (ULA). This time-discretisation introduces a bias so that we do not sample exactly from $p$.

\subsection{Diffusion Models}\label{sec:diffusion}
A Denoising Diffusion Model (DDM) \citep{sohl-dickstein2015dpm,ho2020ddpm,song2021sgm} uses a \emph{forward SDE} to transport the data distribution $p_{\text{data}}$ to a multivariate normal $\mathcal{N}(\mathbf{0}, \mathbf{I})$ through
\begin{align}\label{equation:sgm_forward_process}
    \rmd \mathbf{X}_{t} = f_t \mathbf{X}_{t} \rmd t + g_t \rmd \mathbf{B}_{t}, && \mathbf{X}_{0} \sim p_{\text{data}},
\end{align}
where $t \in [0,1]$ and $\mathbf{B}_t$ is a multivariate standard Brownian motion. The functions $f_t$ and $g_t$ must ensure that $\mathbf{X}_{1} \sim \mathcal{N}(\mathbf{0}, \mathbf{I})$, at least approximately, but are otherwise design choices which can be selected to optimise sample quality; see, e.g. \citep{williams2024score}. 
The reverse-time process $(\mathbf{Y}_t)_{t\in[0,1]}=(\mathbf{X}_{1-t})_{{t\in[0,1]}}$ thus transports $\mathcal{N}(\mathbf{0}, \mathbf{I})$ to $p_{\text{data}}$ and satisfies 
\begin{align}\label{equation:sgm_time_reversal}
    \rmd \mathbf{Y}_t = [-f_{1-t} \mathbf{Y}_{t} + g_{1-t}^2 \nabla_{\mathbf{x}} \log p_{1-t}(\mathbf{Y}_{t})]\rmd t + g_{1-t} \rmd \mathbf{B}_{t}, \quad \mathbf{Y}_0 \sim \mathcal{N}(\mathbf{0}, \mathbf{I}),
\end{align}
where $p_{t}$ is the distribution of $\mathbf{X}_t$. In practice, the score function $\nabla_{\mathbf{x}} \log p_{t}(\cdot)$ is unknown and is approximated by training a neural network $s_{\phi}(\mathbf{x}_t, t)$ minimising the denoising score-matching loss \citep{vincent2011dsm,song2019smld}
\begin{equation}
\label{equation:score_matching_loss}
\begin{aligned}
    \E_{t \sim \mathcal{U}[0,1],\mathbf{X}_{0} \sim p_{\text{data}}}\E_{\mathbf{X}_{t}|\mathbf{X}_0 \sim p_{t|0}(\cdot|\mathbf{X}_{0})}[w_t \lVert s_{\phi}(\mathbf{X}_t, t) - \nabla_{\mathbf{x}} \log p_{t|0}(\mathbf{X}_{t}|\mathbf{X}_{0})\rVert_2^2]
\end{aligned}
\end{equation}
where $w_t>0$ is a weight function and the conditional distribution is given by
\begin{align*}
    p_{t|0}(\mathbf{x}_{t}|\mathbf{x}_{0}) = \mathcal{N}(r_t\mathbf{x}_{0}, r^2_t \sigma^2_t \mathbf{I}), && r_t = \exp\left(\int_0^t f_u \rmd u\right), && \sigma_t = \sqrt{\int_0^t \left[\frac{g_u}{r_u}\right]^2 \rmd u}.
\end{align*}
Different choices of $f_t$ and $g_t$ in Equation \ref{equation:sgm_forward_process} yield the variance exploding (VE) and variance preserving (VP) SDEs, with $r_t$ and $\sigma_t$ known in closed form (see Table 1 of \cite{karras2022edm}). For convenience, we have assumed the VP setting (i.e. $p_1=\mathcal{N}(\mathbf{0}, \mathbf{I})$). Given $f_t$ and $g_t$, together with a learned score and schedule, the task is to evolve Equation \ref{equation:sgm_time_reversal} to draw samples from the generative model.

\subsection{Diffusion Predictor-Corrector Framework}
Equation \ref{equation:sgm_time_reversal} can be split into the sum of a deterministic drift and a Langevin diffusion, in which the deterministic component defines the predictor step and the stochastic component the corrector:
\begin{equation*}
\label{equation:sgm_predictor_corrector}
\begin{aligned}
\rmd \mathbf{Y}_{t} 
&= \underbrace{\left( -f_{1-t} \mathbf{Y}_{t} 
+ \tfrac{g_{1-t}^2}{2} \nabla_{\mathbf{x}} \log p_{1-t}(\mathbf{Y}_{t}) \right)\rmd t}_{\text{Probability Flow Prediction ODE}} 
+ \underbrace{\left( \tfrac{g_{1-t}^2}{2} \nabla_{\mathbf{x}} \log p_{1-t}(\mathbf{Y}_{t})\rmd t 
+ g_{1-t} \rmd \mathbf{B}_{t} \right)}_{\text{Langevin Correction SDE}}.
\end{aligned}
\end{equation*}

This decomposition gives rise to the predictor--corrector framework where at each step we integrate numerically the ODE before applying the Langevin Correction SDE as described in \cite{song2021sgm}. The rationale is that, if $\mathbf{Y}_{t-\eta} \sim p_{1-(t-\eta)}$ then by integrating exactly the deterministic drift with an exact score over an interval of length $\eta$ then we obtain $\mathbf{Y}'_{t} \sim p_{1-t}$ and the Langevin diffusion initialised at such state leaves this distribution invariant and so provide a new sample $\mathbf{Y}_{t} \sim p_{1-t}$. Despite the terminology, the Langevin ``corrector'' is itself defined by an SDE and must be discretised as discussed in Section \ref{subsec:Langevin}. Using an Euler--Maruyama scheme, this reads 
\begin{align}
\mathbf{Y}'_{k\eta}&=\mathbf{Y}_{(k-1)\eta}+ \eta \Big(-f_{1-k\eta} \mathbf{Y}_{(k-1)\eta} 
+ \tfrac{g_{1-k\eta}^2}{2} \nabla_{\mathbf{x}} \log p_{1-k\eta}(\mathbf{Y}_{(k-1)\eta}) \Big ), \label{eq:predictorODE}\\
\mathbf{Y}_{k \eta}&=\mathbf{Y}'_{k\eta}+\tfrac{g_{1-k\eta}^2}{2} h \nabla_{\mathbf{x}} \log p_{1-k\eta}(\mathbf{Y}'_{k \eta})+\sqrt{h} g_{1-k\eta} \mathbf{Z}, \qquad \mathbf{Z} \sim \mathcal{N}(\mathbf{0}, \mathbf{I}), \label{eq:correctorULA}
\end{align}
where \eqref{eq:correctorULA} is a single ULA step but we can use multiple steps. 
Equation \eqref{eq:correctorULA} depends on only $g$, whereas Equation \eqref{eq:predictorODE} depends on $f$ and $g$, granting that upon discretisation, different step sizes for either predictor or corrector may be used with this ansatz. 
 
In summary, despite the terminology, the Langevin ``corrector'' is approximated by ULA as discussed in Section \ref{subsec:Langevin}, so it does not yield exact samples from $p_{1-t}$. While it may reduce errors introduced by the predictor, it introduces its own discretisation bias. In \citep{song2021sgm}, the step size is chosen adaptively, scaling inversely with the squared norm of the score, which in the context of ULA introduces additional bias. 
In the MCMC literature, explicit mechanisms exist to correct such bias and ensure exactness which we describe in the next section.

\subsection{Markov chain Monte Carlo}\label{sec:mcmc}
We are interested in sampling from a target density \(p\) on $\mathbb{R}^d$, known up to a normalising constant. In Section \ref{subsec:Langevin}, we introduced ULA to address this problem but, even if one were to run ULA for millions of iterations, the Euler discretization introduces a non-vanishing bias. We describe here a generic MCMC algorithm which can be used to correct such bias. It generates a Markov chain $(\mathbf{X}^{(i)})_{i \geq 1}$ whose equilibrium distribution is $p$. 

Given the current state $\mathbf{X}^{(i)}=\mathbf{x}$ of the Markov chain, we sample a candidate state $\widetilde{\mathbf{x}}$ from a proposal density $q(\widetilde{\mathbf{x}} \mid \mathbf{x})$. This candidate is accepted with probability $\alpha_{\mathrm{MH}}(\mathbf{x},\widetilde{\mathbf{x}})$ for the MH algorithm and $\alpha_{\mathrm{B}}(\mathbf{x},\widetilde{\mathbf{x}})$ for the Barker algorithm, where
\begin{equation}\label{eq:acceptanceproba}
\alpha_{\mathrm{MH}}(\mathbf{x},\widetilde{\mathbf{x}})
=
1\wedge R(\mathbf{x},\widetilde{\mathbf{x}}),\qquad \alpha_{\mathrm{B}}(\mathbf{x},\widetilde{\mathbf{x}})
=
R(\mathbf{x},\widetilde{\mathbf{x}}) \, (1+R(\mathbf{x},\widetilde{\mathbf{x}}))^{-1},
\end{equation}
with 
\begin{equation}\label{eq:ratiotargetprop}
 R(\mathbf{x},\widetilde{\mathbf{x}})=H(\mathbf{x},\widetilde{\mathbf{x}}) r_p(\mathbf{x},\widetilde{\mathbf{x}}),\qquad
H(\mathbf{x},\widetilde{\mathbf{x}})=\frac{q(\mathbf{x}\mid \widetilde{\mathbf{x}})}{q(\widetilde{\mathbf{x}}\mid \mathbf{x})},\qquad r_p(\mathbf{x},\widetilde{\mathbf{x}})=\frac{p(\widetilde{\mathbf{x}})}{p(\mathbf{x})}.
\end{equation}
If the proposal is accepted then we set $\mathbf{X}^{(i+1)}=\widetilde{\mathbf{x}}$, otherwise $\mathbf{X}^{(i+1)}=\mathbf{X}^{(i)}$.
The popularity of the MH acceptance function stems from the result of \cite{peskun1973optimum}, which shows that, among reversible accept--reject rules, it minimises the asymptotic variance of ergodic averages. However, \cite{latuszynski2013clts} demonstrated that Barker’s acceptance function is not far from optimal: in the worst case, it increases the asymptotic variance of Monte Carlo averages by at most a factor of two compared with MH. Although MCMC algorithms based on Barker’s acceptance function are no longer strictly speaking \emph{Metropolis--Hastings} methods, we retain the term \emph{Metropolis-adjusted} (instead of \emph{Barker-adjusted}) in this paper for simplicity.

Such MCMC techniques can be used to correct for the bias introduced by the ULA algorithm \ref{eq:ULA}. Consider $g=1$ for simplicity, then ULA can be thought of as defining a proposal $q(\widetilde{\mathbf{x}}\mid \mathbf{x})=\mathcal{N}\!\left(\widetilde{\mathbf{x}};\mathbf{x}+\frac{h}{2} \nabla \log p(\mathbf{x}),h \mathbf{I}\right)$. We can thus use this proposal within Metropolis--Hastings to sample (asymptotically in the number of iterations) exactly from $p$. This is a standard approach known as Metropolis-Adjusted Langevin Algorithm (MALA) \citep{besag1994comments,roberts1996exponential}.

In the context of diffusion models, while the proposal-ratio term \(H_t\) is computable from score evaluations, both the MH and Barker's acceptance probability rely on a density ratio
\(p(\widetilde{\mathbf{x}})/p(\mathbf{x})\) which is not accessible as $p$ is the intractable diffused data distribution at diffusion time $t$; i.e., $p=p_t$ in the notation of Section \ref{sec:diffusion}. So a direct implementation of MALA is not available from score evaluations alone. In Section \ref{sec:metropolis-adjusted_diffusion_models}, we show how we can bypass this key issue.

\begin{algorithm}[t]
   \caption{MALA for Denoising Diffusion Models}
   \label{algorithm:mala_ddm}
\begin{algorithmic}
    \State {\bfseries Input:} initial sample $\mathbf{x}_{t}^{(0)}$, score $s(\cdot, t)$, corrector steps $K$, step size $h$.
    \For{$i=0$ {\bfseries to} $K-1$}
    \State Compute $\widetilde{\mathbf{x}}_{t}^{(i+1)}=\mathbf{x}_{t}^{(i)} + \frac{h}{2} s(\mathbf{x}_{t}^{(i)}, t) + \sqrt{h} \mathbf{z}^{(i)}$ with $\mathbf{z}^{(i)} \sim \mathcal{N}(\mathbf{0}, \mathbf{I})$.
    \State Compute ACCEPT/REJECT using Algorithm \ref{algorithm:exact_two_coin} or \ref{algorithm:simpsons_adjustment} with state $\mathbf{x}_{t}$ and proposal $\widetilde{\mathbf{x}}_{t}^{(i+1)}$
    \State Set $\mathbf{x}_{t}^{(i+1)} = \widetilde{\mathbf{x}}_{t}^{(i+1)}$  if ACCEPT; otherwise $\mathbf{x}_{t}^{(i+1)} = \mathbf{x}_{t}^{(i)}$. 
   \EndFor
   \State \Return $\mathbf{x}_{t}^{(K)}$.
\end{algorithmic}
\end{algorithm}

\section{Metropolis-Adjusted Diffusion Models}
\label{sec:metropolis-adjusted_diffusion_models}
In this section, we introduce the Metropolis-Adjusted samplers, intended as correctors for diffusion models, that avoid the problem of non-vanishing bias. The challenge with designing these arises from the fact that the target distribution $p_t$ at time $t$ is unknown and only available through its score $s(t,\mathbf{x}_t)=\nabla_{\mathbf{x}} \log p_t(\mathbf{x})$. In Section \ref{sec:lineintegral}, we re-express this density ratio using a line integral which we then use in Sections~\ref{sec:unbiased_density_ratio_estimator} and \ref{sec:two_coin} to show how one can obtain an unbiased estimate of $r_{p_t}$. We then leverage this to obtain an MCMC algorithm admitting exactly $p_t$ as invariant distribution via a two-coin Bernoulli factory \citep{Goncalves2017-lh}. Section~\ref{sec:quadrature_approximations} provides an alternative lightweight estimate of the integral by a single application of Simpson's rule on the integrand. A comparison of both procedures is provided in Figure \ref{fig:estimators_properties}.

\subsection{A key density ratio identity}\label{sec:lineintegral}
Consider a ULA proposal to sample from $p_t$; i.e., $q(\widetilde{\mathbf{x}}\mid \mathbf{x})=\mathcal{N}\!\left(\widetilde{\mathbf{x}};\mathbf{x}+\frac{h}{2} s(\mathbf{x},t),h \mathbf{I}\right)$. Then, as shown in Section \ref{sec:mcmc}, see \eqref{eq:acceptanceproba} and \eqref{eq:ratiotargetprop}, Metropolis-Adjusted schemes require evaluations of the ratio of proposals,
\begin{equation*}
   \log  H_t(\mathbf{x}, \widetilde{\mathbf{x}}) =\log \left(\frac{q(\mathbf{x}\mid \widetilde{\mathbf{x}})}{q(\widetilde{\mathbf{x}}\mid \mathbf{x})}\right)= \frac{1}{2h} \big \| \widetilde{\mathbf{x}} - \mathbf{x} - \tfrac{h}{2} \, s(\mathbf{x}, t) \big \|^2 - \frac{1}{2h} \big \| \mathbf{x} - \widetilde{\mathbf{x}} - \tfrac{h}{2} \, s(\widetilde{\mathbf{x}}, t) \big \|^2,
\end{equation*}
as well as the target density ratio $r_{p_t}(\mathbf{x},\widetilde{\mathbf{x}})=
p_t(\widetilde{\mathbf{x}})/ p_t(\mathbf{x})$. While we can easily compute $H_t(\mathbf{x}, \widetilde{\mathbf{x}})$, $r_{p_t}$ can not be obtained as directly. Here, we present two methods to estimate $r_{p_t}$ and ultimately perform the Metropolis-Adjusted Diffusion corrector in Algorithm~\ref{algorithm:mala_ddm}, when only the score $s(\cdot,t)$ is available. Both methods are based on the identity (see Appendix \ref{appendix:density-ratio} for a full derivation):
\begin{equation}\label{eq:exact_i}
    \log(r_{p_t}(\mathbf{x}, \widetilde{\mathbf{x}})) = \int_0^1 \big \langle s \big ( \mathbf{x} + u(\widetilde{\mathbf{x}} - \mathbf{x}), t \big ), \widetilde{\mathbf{x}} - \mathbf{x} \big \rangle \, \rmd u,
\end{equation}
which reduces the problem of estimating $r_{p_t}(\mathbf{x},\widetilde{\mathbf{x}})$ to estimating a one-dimensional integral.

\subsection{An unbiased estimator of the density ratio}
\label{sec:unbiased_density_ratio_estimator}
We now construct a bounded unbiased estimator of the intractable density ratio
\(r_{p_t}(\mathbf{x},\widetilde{\mathbf{x}})\). This is the key object needed by the
Bernoulli factory technique presented in Section~\ref{sec:two_coin} which will then be used to perform exact adjustment. From \eqref{eq:exact_i}, a natural unbiased estimator of this line integral is obtained by computing
\begin{equation}
I
=
\left\langle
s(\mathbf{x}+U(\widetilde{\mathbf{x}}-\mathbf{x}),t),
\widetilde{\mathbf{x}}-\mathbf{x}
\right\rangle, \quad U \sim \mathcal{U}([0,1]),
\end{equation}
since we have \(\mathbb E[I]= \log r_{p_t}(\mathbf{x},\widetilde{\mathbf{x}})\). However, directly exponentiating
\(I\) does not give an unbiased estimator of
\(r_{p_t}(\mathbf{x},\widetilde{\mathbf{x}})\), and furthermore it does not
produce a quantity bounded in \([0,1]\) as required of the Bernoulli factory technique.

To ensure boundedness, we make the following assumption. 
\begin{assumption}
\label{ass:score_bound}
There exists a function $C:\mathbb R^d\times\mathbb R^d\to\mathbb R_+$ such that, for every
$\mathbf{x},\widetilde{\mathbf{x}}\in\mathbb R^d$,
\begin{equation}
\max_{u\in[0,1]}
\left|
\left\langle
s(\mathbf{x}+u(\widetilde{\mathbf{x}}-\mathbf{x}),t),
\widetilde{\mathbf{x}}-\mathbf{x}
\right\rangle
\right|
\le
C(\mathbf{x},\widetilde{\mathbf{x}}).
\end{equation}
\end{assumption}
Under this assumption, we obtain $|I| \leq C(\mathbf{x},\widetilde{\mathbf{x}})$ almost surely.
To estimate the exponential of the integral, we use a Poisson-truncated power-series estimator. Let
\(N\sim\mathrm{Poisson}(2C(\mathbf{x},\widetilde{\mathbf{x}}))\), let
\(I_1,\ldots,I_N\) be independent copies of \(I\) and define the estimator,
\begin{equation}
W
=
\prod_{j=1}^{N}
\left(
\frac12+\frac{I_j}{2C(\mathbf{x},\widetilde{\mathbf{x}})}
\right).
\end{equation}
The intuition is that the random product expands the exponential series for
\(\exp\{\log r_{p_t}(\mathbf{x},\widetilde{\mathbf{x}})-C(\mathbf{x},\widetilde{\mathbf{x}})\}\), while the rescaling by
\(C(\mathbf{x},\widetilde{\mathbf{x}})\) keeps each factor within $[0, 1]$. In the following lemma, we show that indeed this is an unbiased estimator of $r_{p_t}$ (see proof in Appendix \ref{appendix:proof_poisson_density_ratio_estimator}).

\begin{lemma}
\label{lem:poisson_density_ratio_estimator}
Suppose Assumption~\ref{ass:score_bound}, then \(W\in[0,1]\) almost surely and $e^{C(\mathbf{x},\widetilde{\mathbf{x}})}\mathbb E[W] = r_{p_t}(\mathbf{x},\widetilde{\mathbf{x}})$.
\end{lemma}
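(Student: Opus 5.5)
The proof has two parts: boundedness, which is immediate, and unbiasedness, which follows from conditioning on $N$ and using the Poisson generating function. Throughout, abbreviate $C=C(\mathbf{x},\widetilde{\mathbf{x}})$ and $\mu=\mathbb E[I]$, and treat the degenerate case $C=0$ separately.

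\emph{Boundedness.} By Assumption~\ref{ass:score_bound}, each $I_j$ satisfies $|I_j|\le C$ almost surely. Hence each factor obeys
\[
\tfrac12+\tfrac{I_j}{2C}\in[0,1].
\]
A product of finitely many numbers in $[0,1]$ lies in $[0,1]$, and $N<\infty$ almost surely. The empty product for $N=0$ equals $1$. Therefore $W\in[0,1]$ almost surely.

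\emph{Unbiasedness.} Condition on $N$. Since $I_1,\dots,I_N$ are i.i.d.\ copies of $I$ and independent of $N$, we get
\[
\mathbb E[W\mid N]=a^N,\qquad a=\tfrac12+\tfrac{\mu}{2C}.
\]
Recall from Section~\ref{sec:unbiased_density_ratio_estimator} that $\mu=\log r_{p_t}(\mathbf{x},\widetilde{\mathbf{x}})$, which comes from identity \eqref{eq:exact_i}. Now apply the Poisson probability generating function $\mathbb E[a^N]=e^{\lambda(a-1)}$ with $\lambda=2C$:
\[
\mathbb E[W]=\exp\!\big(2C(\tfrac{\mu}{2C}-\tfrac12)\big)=\exp(\mu-C).
\]
Multiplying by $e^{C}$ gives $e^{\mu}=r_{p_t}(\mathbf{x},\widetilde{\mathbf{x}})$, as claimed.

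\emph{Justification of the exchanges.} The conditioning step uses the tower property, and $W$ is bounded, so all expectations are finite. The generating-function identity amounts to the series $\sum_n e^{-\lambda}\lambda^n a^n/n!$. This series converges absolutely because $|a|\le 1$, so summing term by term is legitimate.

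\emph{Degenerate case $C=0$.} If $C(\mathbf{x},\widetilde{\mathbf{x}})=0$, the estimator is read with the convention $N\equiv0$ and $W=1$. The assumption then forces $I=0$, so $\mu=0$, and the identity $e^0\cdot 1=1=r_{p_t}$ holds trivially.

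The only substantive input is \eqref{eq:exact_i}, i.e.\ $\mathbb E[I]=\log r_{p_t}$. This holds by the fundamental theorem of calculus along the segment, assuming $s=\nabla\log p_t$ and that $\log p_t$ is $C^1$. I do not expect any real obstacle beyond this mild measure-theoretic bookkeeping.
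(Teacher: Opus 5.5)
Your proof is correct and follows the paper's argument. Boundedness comes from $|I_j|\le C$ forcing each factor into $[0,1]$, and unbiasedness comes from conditioning on $N$ to get $\mathbb E[W\mid N]=\bigl(\tfrac12+\tfrac{\log r_{p_t}}{2C}\bigr)^N$ and then averaging over the Poisson$(2C)$ law, where your generating-function step is exactly the paper's exponential-series computation. Your remarks on the degenerate case $C=0$, and on $\mathbb E[I]=\log r_{p_t}$ implicitly requiring $s=\nabla\log p_t$, are sound refinements that the paper leaves implicit.
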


Hence, conditional on
\(W\), if we sample \(V\sim \mathcal{U}([0,1])\) and declare an accept when \(V\le W\), then the
unconditional accept probability is \(\E[W]\). Thus the construction gives access to a coin with unknown win probability $e^{-C(\mathbf{x},\widetilde{\mathbf{x}})}r_{p_t}(\mathbf{x},\widetilde{\mathbf{x}})$,
without evaluating the density ratio itself. The next subsection uses this coin to obtain an exact
Barker accept--reject algorithm.

\begin{figure}[t]
\begin{minipage}[t]{0.5\textwidth}
\begin{algorithm}[H]
   \caption{Exact MCMC via two coin algorithm}
   \label{algorithm:exact_two_coin}
\begin{algorithmic}
    \State {\bfseries Input:} state $\mathbf{x}$, proposal $\widetilde{\mathbf{x}}$, score function $s(\cdot, t)$, step size $h$, score bound $C(\cdot, \cdot)$.
    \State Set $\alpha' = (1 + H_t(\mathbf{x}, \widetilde{\mathbf{x}}) \exp(C(\mathbf{x}, \widetilde{\mathbf{x}})))^{-1}$
    \While{True}
        \If{$U \leq \alpha'~\text{for}~U \sim \mathcal{U}([0,1])$}
            \State \Return REJECT.
        \EndIf
        \State Sample $N \sim \operatorname{Poisson}(2 C(\mathbf{x}, \widetilde{\mathbf{x}}))$
        \For{$i=1$ {\bfseries to} $N$, $W \gets 1$}
            \State Sample $R \sim \mathcal{U}([0,1])$.
            \State $I \gets \langle s(\mathbf{x} + R (\widetilde{\mathbf{x}} - \mathbf{x}), t), \widetilde{\mathbf{x}} - \mathbf{x} \rangle$.
            \State $W \gets W \cdot \Big ( \frac{1}{2} + \frac{I}{2 C(\mathbf{x}, \widetilde{\mathbf{x}})} \Big )$.
        \EndFor
        \If{$U \leq W~\text{for}~U \sim \mathcal{U}([0,1])$}
            \State \Return ACCEPT.
        \EndIf
    \EndWhile
\end{algorithmic}
\end{algorithm}
\end{minipage}
\hfill
\begin{minipage}[t]{0.49\textwidth}
\begin{algorithm}[H]
   \caption{MH estimate by Simpson's 1/3 rule}
   \label{algorithm:simpsons_adjustment}
\begin{algorithmic}
    \State {\bfseries Input:} state $\mathbf{x}$, proposal $\widetilde{\mathbf{x}}$, score function $s(\cdot, t)$, step size $h$, midpoint $\mathbf{x}_{\texttt{mid}} = \frac{1}{2}(\mathbf{x} + \widetilde{\mathbf{x}})$.
    \State Compute $\widetilde{s} = \frac{1}{6}(s(\mathbf{x}, t) + 4 s(\mathbf{x}_{\texttt{mid}}, t) + s(\widetilde{\mathbf{x}}, t))$.
    \State Compute $\widehat{I} = \langle \widetilde{s}, \widetilde{\mathbf{x}} - \mathbf{x}\rangle$.
    \State Compute $\widehat{\alpha} = \min\{1, \exp(\widehat{I}) H_t(\mathbf{x}, \widetilde{\mathbf{x}})\}$.
    \State Sample $U \sim \mathcal{U}([0,1])$.
    \If{$U \leq \widehat{\alpha}$}
        \State \Return ACCEPT.
    \EndIf
    \State \Return REJECT.
\end{algorithmic}
\end{algorithm}

\centering
\begin{tabular}{lcc}
    \toprule
    Estimator & Unbiased & Low Variance \\
    \midrule
    Alg. \ref{algorithm:exact_two_coin} & $\checkmark$  & $\times$     \\
    Alg. \ref{algorithm:simpsons_adjustment} & $\times$ & $\checkmark$      \\
    \bottomrule
\end{tabular}
\caption{Properties of integral estimators.}
\label{fig:estimators_properties}
\end{minipage}
\end{figure}

\subsection{A two-coin MCMC algorithm}
\label{sec:two_coin}

We now turn the bounded estimator from Section~\ref{sec:unbiased_density_ratio_estimator} into an
exact Barker correction. A Bernoulli factory is a randomized procedure which produces a Bernoulli
random variable with a desired probability \(f(w)\), while only having access to flips of a coin with
unknown probability \(w\) \citep{keane1994bernoulli,nacu2005fast}. Here, the \(w\)-coin is generated by the random variable \(W\) in
Lemma~\ref{lem:poisson_density_ratio_estimator}, and the desired probability is the Barker
acceptance probability $\alpha_{\mathrm B}(\mathbf{x},\widetilde{\mathbf{x}})$ (see \eqref{eq:acceptanceproba}).
To simplify notation, we write $H=H_t(\mathbf{x},\widetilde{\mathbf{x}})$, $C=C(\mathbf{x},\widetilde{\mathbf{x}})$ and $r=r_{p_t}(\mathbf{x},\widetilde{\mathbf{x}})$. The two-coin construction proceeds in repeated rounds. In each round, we first reject immediately
with probability,
\begin{equation}\label{eq:alpha_prime}
\alpha'
=
(1+He^C)^{-1}.
\end{equation}
If this rejection does not occur, we flip a coin with probability \(W\). If the coin succeeds, we accept the proposal; otherwise, we repeat the round. Thus each round has
three possible outcomes:
\begin{align*}
    \mathbb P(\text{reject in round}) \hspace{-0.1em}= \hspace{-0.1em}\alpha', &&
\mathbb P(\text{accept in round}) \hspace{-0.1em}=\hspace{-0.1em} (1-\alpha')e^{-C}r, &&
\mathbb P(\text{restart}) \hspace{-0.1em}=\hspace{-0.1em} (1-\alpha')(1-e^{-C}r).
\end{align*}

Summing over the resulting geometric sequence of restarts gives the eventual acceptance probability $\alpha_{\mathrm B}(\mathbf{x},\widetilde{\mathbf{x}})$.
Hence the algorithm simulates the exact Barker accept--reject decision without evaluating
\(r_{p_t}(\mathbf{x},\widetilde{\mathbf{x}})\). We state this result rigorously in the following theorem (see proof in Appendix \ref{proof:two_coin_barker}).

\begin{theorem}
\label{thm:two_coin_barker}
Suppose that \(s(\cdot,t)=\nabla\log p_t(\cdot)\) for some positive differentiable density $p_t$ and that Assumption~\ref{ass:score_bound} holds. Then Algorithm~2 almost surely terminates in finitely many iterations and returns an accept with probability $\alpha_{\mathrm B}(\mathbf{x},\widetilde{\mathbf{x}})$ (see \eqref{eq:acceptanceproba}).
%
Consequently, the adjusted proposal in Algorithm \ref{algorithm:mala_ddm} using Algorithm \ref{algorithm:exact_two_coin} is reversible with respect to \(p_t\) and hence, the algorithm produces samples that converge in total variation distance to \(p_t\).
\end{theorem}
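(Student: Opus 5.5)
The plan is to analyse a single round of Algorithm~\ref{algorithm:exact_two_coin}, then sum over rounds, and finally derive reversibility from the standard Barker detailed-balance identity.

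\textbf{Single round.} Fix $\mathbf{x},\widetilde{\mathbf{x}}$ and write $H,C,r$ as in the text. Since $s(\cdot,t)=\nabla\log p_t$ with $p_t$ positive and differentiable, the fundamental theorem of calculus along the segment $u\mapsto \mathbf{x}+u(\widetilde{\mathbf{x}}-\mathbf{x})$ gives identity \eqref{eq:exact_i}. Lemma~\ref{lem:poisson_density_ratio_estimator} then applies: $W\in[0,1]$ almost surely and $\mathbb E[W]=e^{-C}r$.

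Each round draws fresh, independent randomness. In it, we reject with probability $\alpha'$. Otherwise we draw $W$ and accept if $V\le W$; conditioning on $W$ and taking expectations, this happens with probability $\mathbb E[W]=e^{-C}r$. This yields the three outcome probabilities displayed before the theorem.

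\textbf{Termination.} Let $\rho=(1-\alpha')(1-e^{-C}r)$ be the restart probability. Since $H>0$ and $C<\infty$, we have $\alpha'>0$, so $\rho\le 1-\alpha'<1$. The number of rounds is therefore geometric with finite mean, and the algorithm terminates almost surely. Each round uses a Poisson-distributed, hence almost surely finite, number of score evaluations, so the total number of score calls is also almost surely finite.

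\textbf{Acceptance probability.} Summing the geometric series,
\[
\mathbb P(\text{accept})=\frac{(1-\alpha')e^{-C}r}{1-\rho}=\frac{(1-\alpha')e^{-C}r}{\alpha'+(1-\alpha')e^{-C}r}.
\]
From \eqref{eq:alpha_prime}, $1-\alpha'=He^C/(1+He^C)$. Multiplying numerator and denominator by $1+He^C$ gives
\[
\mathbb P(\text{accept})=\frac{Hr}{1+Hr}=\alpha_{\mathrm B}(\mathbf{x},\widetilde{\mathbf{x}}).
\]
This algebra is the crux and is routine.

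\textbf{Consequences.} With acceptance probability $\alpha_{\mathrm B}$, the off-diagonal kernel is $q(\widetilde{\mathbf{x}}\mid\mathbf{x})\alpha_{\mathrm B}(\mathbf{x},\widetilde{\mathbf{x}})$. Multiplying by $p_t(\mathbf{x})$ gives
\[
p_t(\mathbf{x})\,q(\widetilde{\mathbf{x}}\mid\mathbf{x})\,\alpha_{\mathrm B}(\mathbf{x},\widetilde{\mathbf{x}})=\frac{a\,b}{a+b},\qquad a=p_t(\mathbf{x})q(\widetilde{\mathbf{x}}\mid\mathbf{x}),\quad b=p_t(\widetilde{\mathbf{x}})q(\mathbf{x}\mid\widetilde{\mathbf{x}}).
\]
This expression is symmetric in $(\mathbf{x},\widetilde{\mathbf{x}})$, which gives detailed balance and hence $p_t$-invariance.

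For convergence in total variation, I would invoke standard MCMC theory (e.g.\ Tierney's theorem): an invariant, $p_t$-irreducible, aperiodic chain converges in total variation from $p_t$-almost every start. Irreducibility holds because the Gaussian proposal has positive density everywhere, $p_t>0$, and $\alpha_{\mathrm B}>0$. Aperiodicity holds because there is positive rejection probability.

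\textbf{Main obstacle.} The step I expect to be delicate is the measure-theoretic care in the round analysis. One must check that the $W$ produced in a round is independent of the earlier rejection draw, so that the conditional acceptance probability really equals $\mathbb E[W]$. One must also handle the convergence claim holding only for almost every initial point, unless Harris recurrence is established separately, which it is here since the kernel has an absolutely continuous component.
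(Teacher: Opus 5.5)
Your proposal is correct and follows essentially the same route as the paper: a three-outcome analysis of one round, a geometric tail bound giving almost-sure termination, the geometric-series computation yielding $Hr/(1+Hr)=\alpha_{\mathrm B}$, Barker detailed balance, and then irreducibility plus the Roberts--Rosenthal Harris-recurrence argument for total-variation convergence. Your version is slightly more careful than the paper's in two places. You justify $\rho<1$ through $\alpha'>0$, which really uses $He^C<\infty$ rather than $H>0$; the paper's stated reason, $\alpha'<1$, does not by itself give $\rho<1$. You also check aperiodicity explicitly, which the paper leaves implicit.
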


To determine the number of iterations required for Algorithm~\ref{algorithm:exact_two_coin} to terminate, we state the following proposition which characterises its iteration complexity (see proof in Appendix \ref{proof:two_coin_cost}). 

\begin{proposition}
\label{prop:two_coin_cost}
In the setting of Theorem~\ref{thm:two_coin_barker}, the number of iterations before Algorithm~\ref{algorithm:exact_two_coin} terminates is distributed as a geometric distribution with probability $(1 + Hr)/(1 + H e^C)$.
Furthermore, the expected number of score queries used by the algorithm is $(2CHe^{C})/(1 + H r)$.
\end{proposition}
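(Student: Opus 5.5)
The plan is to treat Algorithm~\ref{algorithm:exact_two_coin} as a sequence of i.i.d.\ rounds. The two claims then follow from a one-round computation plus Wald's identity.

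\emph{Step 1: one round.} Each pass through the \texttt{while} loop uses fresh randomness: the uniform for the $\alpha'$-test, the Poisson variable $N$, the uniforms $R_1,\dots,R_N$, and the final uniform. The rounds are therefore i.i.d. As computed in Section~\ref{sec:two_coin}, a round terminates either by rejection, with probability $\alpha'$, or by acceptance. Conditionally on passing the first test, acceptance happens with probability $\mathbb E[W]$, which equals $e^{-C}r$ by Lemma~\ref{lem:poisson_density_ratio_estimator}. Using \eqref{eq:alpha_prime}, the per-round termination probability is
\begin{equation*}
\alpha'+(1-\alpha')e^{-C}r=\frac{1}{1+He^C}+\frac{He^C}{1+He^C}\,e^{-C}r=\frac{1+Hr}{1+He^C}.
\end{equation*}

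\emph{Step 2: the number of rounds.} Rounds are independent and each terminates with the same probability, so the number of rounds $T$ is geometric with this success parameter. This parameter is strictly positive, which also recovers the almost sure termination in Theorem~\ref{thm:two_coin_barker}. In particular $\mathbb E[T]=(1+He^C)/(1+Hr)$.

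\emph{Step 3: score queries.} Let $Q_i$ denote the number of score evaluations in round $i$. In a round the algorithm queries the score only after surviving the $\alpha'$-test, and then it makes exactly $N\sim\mathrm{Poisson}(2C)$ queries. Since $N$ is independent of that test,
\begin{equation*}
\mathbb E[Q_i]=(1-\alpha')\cdot 2C=\frac{2CHe^C}{1+He^C}.
\end{equation*}
The total number of queries is $\sum_{i=1}^{T}Q_i$.

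\emph{Step 4: Wald's identity (the main point of care).} $Q_i$ is \emph{not} independent of whether round $i$ terminates, since both depend on the same $\alpha'$-test. However, the event $\{T\ge i\}$ is determined only by rounds $1,\dots,i-1$, so it is independent of $Q_i$. Since $Q_i\ge0$, Tonelli's theorem gives
\begin{equation*}
\mathbb E\Bigl[\sum_{i\ge1}Q_i\mathbf 1\{T\ge i\}\Bigr]=\sum_{i\ge1}\mathbb E[Q_i]\,\mathbb P(T\ge i)=\mathbb E[Q_1]\,\mathbb E[T].
\end{equation*}
Substituting the expressions from Steps 2 and 3, the factors $1+He^C$ cancel, leaving $2CHe^C/(1+Hr)$.
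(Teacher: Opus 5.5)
Your proof is correct and follows essentially the same route as the paper. The per-round termination probability $\alpha'+(1-\alpha')e^{-C}r=(1+Hr)/(1+He^C)$ gives the geometric law, and Wald's identity gives $\mathbb{E}[Q_1]\,\mathbb{E}[T]=\frac{2CHe^C}{1+He^C}\cdot\frac{1+He^C}{1+Hr}$. The paper extends the query counts past termination to an i.i.d.\ sequence and cites Wald's lemma for the stopping time $K$. You instead prove the identity inline, via Tonelli and the independence of $\{T\ge i\}$ from $Q_i$, which also removes the need to check integrability separately.
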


The computational cost of Algorithm~\ref{algorithm:exact_two_coin} is governed by the tightness of $C$ as a bound on $|\log(r)|$. If it is quite a tight bound, one can expect very few iterations. A loose bound $C \gg |\log r|$ is costly, since the expected number of score queries grows as $e^C$. For small Langevin step sizes $h$, the regime $C \ll 1$ arises naturally. Since $C$ must bound the inner product $\langle s(\mathbf{x} + U(\widetilde{\mathbf{x}} - \mathbf{x}), t), \widetilde{\mathbf{x}} - \mathbf{x} \rangle$, it scales with $\|\widetilde{\mathbf{x}} - \mathbf{x}\| = O((hd)^{1/2})$. In this regime, the expected number of score queries reduces to approximately $2CHe^C / (1 + Hr) = O((hd)^{1/2})$, decaying to zero as the step size is reduced.

We conclude with a brief discussion of implementation considerations for Algorithm \ref{algorithm:exact_two_coin}.

\textbf{Choosing $C$.}
Properly selecting $C(\mathbf{x}, \widetilde{\mathbf{x}})$ so that Assumption \ref{ass:score_bound} holds is essential for Algorithm~\ref{algorithm:exact_two_coin}. Two natural settings yield computable choices: (i) when the denoising function associated with the score is bounded---for example, when the data is compactly supported, or in image domains, where clipping the denoiser to the data range is standard practice; (ii) when the score is Lipschitz in $\mathbf{x}$. In Appendix~\ref{appendix:implementation_considerations_c}, we provide directly computable choices for $C$ in these settings.

\textbf{Choosing $h$.}
The step size controls a familiar tradeoff between mixing speed and rejection rate. We show that in the high-dimensional limit with a Gaussian target, the optimal step size scales as $h \sim d^{-1/3}$ and yields an asymptotic acceptance rate of $0.347$ (see Appendix~\ref{app:proof_optimal_scaling}). This rate provides a practical heuristic: tune $h$ to achieve an observed acceptance rate of roughly one-third.

\subsection{Newton--Cotes quadrature approximation}
\label{sec:quadrature_approximations}
While Algorithm \ref{algorithm:exact_two_coin} gives an exact Barker correction, it requires a random number of score evaluations. A cheaper alternative is to approximate the line integral in \eqref{eq:exact_i} deterministically with a Newton--Cotes quadrature rule, and then plug the result into the usual Metropolis--Hastings acceptance probability. 
A Newton--Cotes quadrature rule with $N+1$ equally spaced points approximates this integral using
\begin{equation}
\label{eq:quadrature_ihat}
    \hat{I}_{\Delta}^{(N+1)} = \sum_{i=0}^{N} w_i \left\langle s\left(\mathbf{x} + \tfrac{i}{N}(\widetilde{\mathbf{x}} - \mathbf{x}), t\right), \widetilde{\mathbf{x}} - \mathbf{x} \right \rangle,
\end{equation}
where $i=0,1,\dots,N$ and $\sum_{i=0}^{N} w_i =1$. This gives rise to well-known quadrature rules such as the trapezoidal rule and Simpson's 1/3 rule, where truncation errors are described by derivatives of the function $f(u) = \big \langle s \big ( \mathbf{x} + u(\widetilde{\mathbf{x}} - \mathbf{x}), t \big ), \widetilde{\mathbf{x}} - \mathbf{x} \big \rangle$ (i.e. the integrand of Equation \eqref{eq:exact_i}). For a survey of Newton-Cotes quadrature rules and their truncation errors, we refer the reader to \cite{atkinson1989numerical}.
\begin{example}[Simpson's 1/3 Rule]
\label{example:simpsons_13_rule}
\normalfont
Setting $N=2$, $w_0=w_2=\frac{1}{6}$ and $w_1=\frac{4}{6}$ yields the estimate
\begin{align*}
    \hat{I}_{\Delta}^{(3)} = \frac{1}{6} \langle s(\mathbf{x}, t) + 4s(\mathbf{x}_{\texttt{mid}}, t) + s(\widetilde{\mathbf{x}},t), \widetilde{\mathbf{x}} - \mathbf{x} \rangle, \qquad \mathbf{x}_{\texttt{mid}} = \frac{1}{2}(\mathbf{x} + \widetilde{\mathbf{x}}).
\end{align*}
This has truncation error $-\frac{1}{2880} f^{(4)}(u^{*})$ for some $u^{*} \in [0,1]$, and is of order $\mathcal{O}(h^{5/2})$.
\end{example}
With an estimate from Equation \ref{eq:quadrature_ihat} in hand, we can use it in place of $r_{p_t}$ in the Metropolis-Hastings acceptance probability $\hat{\alpha}_{\operatorname{MH}}(\mathbf{x}, \widetilde{\mathbf{x}}) = \min \big \{ 1, \exp ( \hat{I}_{\Delta} ) H_t(\mathbf{x}, \widetilde{\mathbf{x}}) \big \}$. Empirically, we found Simpson's 1/3 rule effective. This yields Algorithm \ref{algorithm:simpsons_adjustment}. Other Newton--Cotes quadrature rules could be used to approximate $\hat{I}$.
The estimate only requires access to the score and enjoys two advantages over the exact method. The first is that it is more efficient to compute as other quantities including the weights $\{w_i\}_{i=0}^{N}$ are known. The second is that it provides a closed form estimate of the acceptance probability. The main disadvantage is that it is a biased estimate of Equation \eqref{eq:exact_i} due to truncation.


\section{Related work}

\textbf{Bernoulli Factories.}
For diffusion models, \citep{chen2026sampling} used a Bernoulli factory to sample from $p_{t|t+\delta}(x|x')\propto p_t(x)\exp(-||x-\alpha_t  x'||^2/(2\sigma^2_t)$ when $p_t$ is only known through its score. This \emph{couples} each predictor step with a rejection sampling step, with the latter acting as the corrector. This differs from our work as MADM corrector steps are optional and separate from predictor steps. Bernoulli factories have also been used in MCMC to sample intractable posteriors \citep{herbei2014estimating,Goncalves2017-lh,morina2022bernoulli,vats2022efficient,gonccalves2023exact}.

\textbf{Soft Metropolis--Hastings Adjustments.}
Under the assumption that $p_t \approx p_{t+1}$, \cite{feng2025softmh} similarly suggested using each predictor step as a proposal, and \emph{interpolating} between the proposal and current state with the MH acceptance probability. The latter relaxes the hard update from MH adjustments to a \emph{soft} update. This differs from our work as we make no assumptions on consecutive marginals and apply MH adjustments to \emph{corrector} steps (i.e., $\mathbf{x}_{t}^{(i+1)}$ only targets $p_t$).

\textbf{Metropolis--Hastings Adjustments for Compositional Models.}
\citep{sjoberg2026composition} proposed estimating Equation \ref{eq:exact_i} with the trapezoidal rule for MH adjustments in compositional models, such as guided diffusion models. While their trapezoidal rule and our Simpson's rule are similar ways of estimating Equation \ref{eq:exact_i}, as both are instances of Newton-Cotes quadrature rules in Section \ref{sec:quadrature_approximations}, we emphasise that their FIDs on ImageNet are not competitive and comparisons against other image baselines in \cite{karras2022edm} are not provided. The focus of our work differs from theirs as we extend the Predictor-Corrector framework of \citep{song2021sgm} with MH adjustments to corrector steps for both \emph{unconditional} and class-conditional models, and demonstrate their effectiveness through consistent gains in FID over all deterministic sampling baselines in \cite{karras2022edm}.

\textbf{Learning Acceptance Probabilities.}
\citep{aloui2025scorebalance} proposed \emph{learning} acceptance probabilities $\alpha_{\textup{MH}}$ by training an \emph{additional} neural network $\alpha_{\theta}$ with a loss to enforce detailed balance. This differs from MADM as calculations of acceptance probabilities are either bypassed (Algorithm \ref{algorithm:exact_two_coin}) or performed directly with suitable approximations of the density ratio (Algorithm \ref{algorithm:simpsons_adjustment}); enabling the use of MH adjustments while only requiring access to a pre-trained score model.

\section{Experiments}

\subsection{Protocol}
Our proposed MADM corrector samplers only require access to a score function. This makes them readily applicable to any pre-trained diffusion model. The aim of our experiments was to illustrate the inherent gap in sample quality in existing PC samplers, and to demonstrate the effectiveness of MADM corrections to remedy this. We considered two types of datasets for our experiments: 2D datasets and image datasets. On 2D datasets, we showed that the ancestral sampler generated samples that are off the support of underlying data distributions, and that MADM corrector steps move these outliers back to the support. On image datasets, we demonstrated numerical improvements in FID by interweaving probability flow ODE steps and MADM corrector steps.

\begin{figure}[t]
    \centering
    \includegraphics[width=0.9\textwidth]{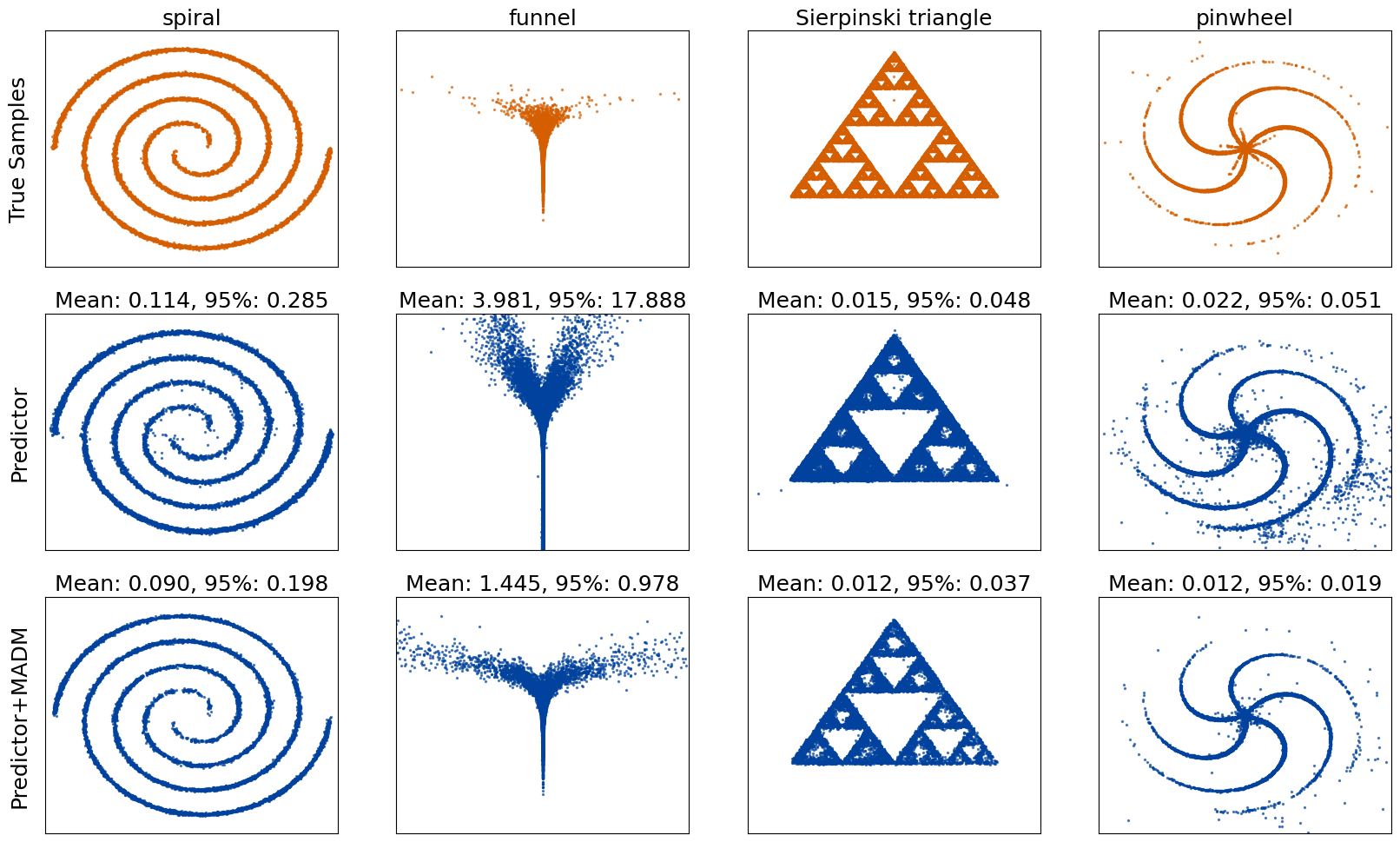}
    \caption{DDM sampling using an ancestral sampler predictor, and MADM correction. Mean distance to the true samples and the 95\% sample containment distance are shown.}
    \label{figure:2d_plots}
\end{figure}

\subsection{Synthetic 2D Datasets}
We considered the ancestral sampler as the predictor on four 2D datasets: spiral, funnel, Sierpinski triangle, and pinwheel. It is known that the ancestral sampler is a specific discretisation of Equation \ref{equation:sgm_time_reversal} (see Appendix E of \cite{song2021sgm}). In other words, we can view one step of the ancestral sampler as equivalent to: one probability flow ODE predictor step and one \emph{unadjusted} Langevin corrector step (i.e. ULA) with a shared step size. Our results with these settings are shown in Figure \ref{figure:2d_plots}. The second row in Figure \ref{figure:2d_plots} illustrates that the ULA step embedded within each ancestral sampler step is not sufficient to move generated samples to the support of underlying data distributions, leaving a large number of outliers. The third row in Figure \ref{figure:2d_plots} shows that MADM corrector steps, using a hybrid sampling strategy (see Appendix \ref{sec:madm_hybrid}), allow us to move samples to the support of underlying data distributions, leaving few to no outliers. This illustrates an improvement in sample quality by adding MH adjustments within the sampling process.

\subsection{Image Datasets}
We considered the probability flow ODE as the predictor on four standard image datasets: CIFAR-10 \citep{krizhevsky2009learning}, FFHQ \citep{karras2019gans}, AFHQv2 \citep{choi2026stargan} and ImageNet-64 \citep{deng2009imagenet}. We used the pre-trained models and default sampling strategy from \cite{karras2022edm} and evaluated sample quality using FID. The Heun ODE solver served as the natural and scalable baseline for image datasets, as deterministic sampling is often preferred for efficiency. We also compared against the Euler ODE solver, as it often serves as a starting point in model development for ODE-based models. Finally, we used 1 MADM corrector step per predictor step, to have a closer analogue of PC sampler experiments in \cite{song2021sgm} with MADM corrections. Due to budget constraints of academic computing and the exact MCMC algorithm (Algorithm \ref{algorithm:exact_two_coin}) requiring many score evaluations, we focus on MH estimates by Simpson's rule (Algorithm \ref{algorithm:simpsons_adjustment}) in our experiments. Our FID results with these settings are shown in Table \ref{table:image_dataset_results}.

For the unconditional models on CIFAR-10, FFHQ and AFHQv2, applying one MADM corrector step yielded FIDs competitive to those in \cite{karras2022edm,hatamizadeh2025diffit}. 
While some gains were modest, we note that they were consistent across all three datasets and ODE solvers. This suggests that MADM improves or matches the sample quality from deterministic sampling.

For the class-conditional ImageNet-64 model, we saw that applying one MADM corrector step per Euler predictor step yielded larger gains in FID than using only Heun predictor steps without corrector steps. This suggests that it is more beneficial to use MH adjustments, rather than a higher-order ODE solver, to mitigate discretisation errors when the number of discretisation steps is large.

\begin{table}[t]
\caption{Fr\'echet Inception Distance (FID) on image datasets sampled using the probability flow ODE (PF-ODE) and MADM with pre-trained models from \cite{karras2022edm}. All FIDs are calculated using 50,000 samples. We highlight the best FID in bold.}
\label{table:image_dataset_results}
\begin{center}
\begin{tabular}{lccccc}
\toprule
ODE Solver & PC Sampler & CIFAR-10 & FFHQ & AFHQv2 & ImageNet\\ 
\midrule
Heun & PF-ODE & 1.96 & 2.47 & 2.04 & 2.32\\
& PF-ODE + MADM & \textbf{1.95} & \textbf{2.43} & \textbf{2.03} & \textbf{2.15}\\
\midrule
Euler & PF-ODE & 7.62 & 4.59 & 2.93 & 2.41\\
& PF-ODE + MADM & \textbf{7.52} & \textbf{4.52} & \textbf{2.91} & \textbf{2.25}\\
\bottomrule
\end{tabular}
\end{center}
\end{table}

\section{Conclusion}
\label{sec:conclusion}
We have introduced MADM as a Metropolis-adjusted corrector framework that naturally extends the Predictor-Corrector framework of diffusion models, to mitigate discretisation bias. We presented the first exact method to compute Barker-adjusted Langevin corrections for diffusion models, based on a two-coin Bernoulli factory algorithm. We also presented an efficient approximation to compute Metropolis-adjusted Langevin corrections, based on Simpson's 1/3 rule. Empirically, we demonstrated their effectiveness for improving sample quality on both synthetic and image datasets, with consistent gains in FID on the latter. Regarding limitations, we note that the exact method can be computationally expensive due to the random number of score evaluations required; however, we found that using Simpson's rule can partially mitigate this. Furthermore, our work assumes perfect score estimation, though we note this is not unique to our work and is an inherent limitation of diffusion models. Future work can focus on developing efficient variants of our exact method or its subroutines.

\section{Acknowledgements}
\label{sec:acknowledgements}
We acknowledge the use of resources provided by the Isambard-AI National AI Research Resource (AIRR) \citep{mcintosh-smith2024isambard}. Isambard-AI is operated by the University of Bristol and is funded by the UK Government’s Department for Science, Innovation and Technology (DSIT) via UK Research and Innovation; and the Science and Technology Facilities Council [ST/AIRR/I-A-I/1023]. Kevin H. Lam gratefully acknowledges his PhD funding from Google DeepMind. Tyler Farghly is supported by the European Research Council Starting Grant DYNASTY - 101039676. Christopher Williams acknowledges support from the Defence Science and Technology (DST) Group Australia. Jun Yang acknowledges support from the Independent Research Fund Denmark (DFF) through the Sapere Aude Starting Grant (5251-00032B).

\bibliography{main.bib}
\bibliographystyle{plainnat}

\newpage

\appendix
\section{Derivation of density ratio identity}
\label{appendix:density-ratio}
Recall that $r_{p_t}(\mathbf{x},\widetilde{\mathbf{x}})=\frac{p_t(\widetilde{\mathbf{x}})}{p_t(\mathbf{x})}$, so it follows that $\log r_{p_t}(\mathbf{x},\widetilde{\mathbf{x}})=\log p_t(\widetilde{\mathbf{x}}) - \log p_t(\mathbf{x})$. Note that $p$ is differentiable. By the fundamental theorem of calculus, we can write
\begin{align*}
    \log p_t(\widetilde{\mathbf{x}}) - \log p_t(\mathbf{x}) &= \int_0^1 \big \langle \nabla \log p_t \big ( \mathbf{x} + u(\widetilde{\mathbf{x}} - \mathbf{x}) \big ), \widetilde{\mathbf{x}} - \mathbf{x} \big \rangle \, \rmd u \\
    &= \int_0^1 \big \langle s \big ( \mathbf{x} + u(\widetilde{\mathbf{x}} - \mathbf{x}), t \big ), \widetilde{\mathbf{x}} - \mathbf{x} \big \rangle \, \rmd u,
\end{align*}
where the second equality follows from setting $s(\cdot, t)=\nabla \log p_t$. This completes our derivation of Equation \eqref{eq:exact_i}.

\section{Proofs for the exact estimator}\label{app:proof_bernoulli}
For the sake of brevity, we use the shorthand $H = H_t(\mathbf{x}, \tilde{\mathbf{x}})$, $C = C(\mathbf{x}, \tilde{\mathbf{x}})$, $r = r_{p_t}(\mathbf{x}, \tilde{\mathbf{x}})$.

\subsection{Proof of Lemma \ref{lem:poisson_density_ratio_estimator}}
\label{appendix:proof_poisson_density_ratio_estimator}
By Assumption \ref{ass:score_bound}, we have that $|I_i| \leq C$ holds almost surely, for all $i$, and hence,
\begin{equation*}
    \frac12+\frac{I_i}{2C}\in[0,1].
\end{equation*}
Since $W$ is formed of a product of these variables, we also obtain $W \in [0, 1]$ almost surely.

Since $\E[I_i] = \log(r)$ by \eqref{eq:exact_i}, and $\{I_i\}_{i=1}^N$ are i.i.d. and also independent of $N$, we obtain,
\begin{align*}
    \E[W|N] &= \prod_{i=1}^N \E \bigg [ \frac{1}{2} + \frac{I_i}{2C} \bigg ]\\
    &=
    \bigg ( \frac{1}{2} + \frac{\log(r)}{2C} \bigg )^N,
\end{align*}
where we use the convention $0^0 = 1$. Recalling that $N \sim \text{Poisson}(2 C)$, the law of total expectation gives,
\begin{align*}
    \E[W] &= \sum_{n=0}^\infty \frac{(2 C)^n \exp(-2 C)}{n!} \E[W|N=n] \\
    &= \exp(-2 C) \sum_{n=0}^\infty \frac{1}{n!} \big ( C + \log(r) \big )^n.
\end{align*}
Finally, recognising the sum as the Taylor series of the exponential, we obtain,
\begin{align*}
    \E[W] &= \exp(-2 C) \exp(C + \log(r))\\
    &= \exp(- C) r,
\end{align*}
completing the proof.
\qed

\subsection{Proof of Theorem \ref{thm:two_coin_barker}}\label{proof:two_coin_barker}
In each outer iteration of Algorithm \ref{algorithm:exact_two_coin}, one of three events occurs. The iteration could produce an immediate rejection, which results from a coin flip with probability $\alpha'$. The iteration could also produce an acceptance, which requires first avoiding rejection and then winning the following acceptance coin flip with probability $W$. Since the immediate rejection flip and the acceptance flip are independent, the tower property gives this probability as $(1 - \alpha') \E[W] = (1 - \alpha') \exp(- C) r$. Finally, it could occur that neither coin wins and neither a rejection or acceptance is returned, resulting in the loop restarting. This occurs with the remaining probability,
\begin{equation*}
    1 - \alpha' - (1 - \alpha') \exp(-C) r = (1 - \alpha')(1 - \exp(-C) r).
\end{equation*}

We begin by briefly analysing the number of iterations (of the outer loop) before termination, which we denote by $K$. Since $K$ being at least as large as $n$ is equivalent to the event that the first $n-1$ iterations produced restarts, it follows that,
\begin{align*}
    \prob(K \geq n) &= (1 - \alpha')^{n-1} (1 - \exp(-C) r)^{n-1}.
\end{align*}
Since $H > 0$, we have $\alpha' < 1$ and so the restart probability satisfies $(1 - \alpha') (1 - \exp(-C) r) \in [0, 1)$. Hence we have $\lim_{n \to \infty} \prob(K \geq n) = 0$. To show that $K$ is almost-surely finite, we note that for any $n$,
\begin{align*}
    \prob(K < \infty) \geq \prob(K < n) = 1 - \prob(K \geq n).
\end{align*}
In particular, we have that
\begin{equation*}
    \prob(K < \infty) \geq 1 - \lim_{n \to \infty} \prob(K \geq n) = 1.
\end{equation*}

We now compute the probability of acceptance. Let $A_n$ denote the events that the algorithm accepts on the $n$-th outer iteration. Since $A_n$ necessarily requires a sequence of $n-1$ no decision iterations followed by an acceptance, it follows directly that,
\begin{equation*}
    \Pr( A_n ) = (1 - \alpha')^{n-1} (1 - \exp(-C) r)^{n-1} (1 - \alpha') \exp(-C) r.
\end{equation*}
To compute the probability that the algorithm terminates with an acceptance, we must compute the probability of the union $\cup_{n=1}^\infty A_n$. Since the events $\{A_n\}_{n=1}^\infty$ are disjoint, we obtain the acceptance probability as a geometric series:
\begin{align*}
    \Pr\bigg ( \bigcup_{n=1}^\infty A_n \bigg ) &= \sum_{n=1}^{\infty} \Pr( A_n ) \\
    &= (1 - \alpha') \exp(-C) r \sum_{n=1}^{\infty} (1 - \alpha')^{n-1} (1 - \exp(-C) r)^{n-1}.
\end{align*}
Since the sum is a geometric series with rate $(1 - \alpha') (1 - \exp(-C) r) \in [0, 1)$, it follows that the sum converges with,
\begin{align*}
    \Pr\bigg ( \bigcup_{n=1}^\infty A_n \bigg ) &= (1 - \alpha') \exp(-C) r \frac{1}{1 - (1 - \alpha') (1 - \exp(-C) r)}\\
    &= \frac{(1 - \alpha') \exp(-C) r}{\alpha' + (1 - \alpha') \exp(-C) r}.
\end{align*}
Recalling that $\alpha' = (1 + H \exp(C))^{-1}$, this gives,
\begin{equation*}
    \Pr\bigg ( \bigcup_{n=1}^\infty A_n \bigg ) = \frac{H \, r}{1 + H \, r} = \alpha_B(\mathbf{x}, \tilde{\mathbf{x}}).
\end{equation*}

Reversibility with respect to $p_t$ follows since the Barker acceptance function satisfies $a_B(r) = r a_B(1/r)$ for all $r > 0$, which is the sufficient condition for detailed balance, as established in Section~\ref{sec:mcmc}.

For convergence, we use that $p_t$ is a positive density and that the Langevin proposal has full support on $\mathbb{R}^d$, which guarantees $\alpha_B(\mathbf{x}, \tilde{\mathbf{x}}) > 0$ everywhere and hence that the chain is $\phi$-irreducible with respect to Lebesgue measure. Then Algorithm~\ref{algorithm:mala_ddm}  is Harris recurrent, which follows from the same line of argument as in \cite{Gareth-O-Roberts2006-mi}. Since $p_t$ is an invariant probability measure, the chain is positive recurrent. Convergence in total variation distance to $p_t$ follows.

\qed

\subsection{Proof of Proposition \ref{prop:two_coin_cost}}\label{proof:two_coin_cost}
From the proof of Theorem~\ref{thm:two_coin_barker}, the probability of termination in a single outer iteration is,
\begin{equation*}
    \alpha' + (1-\alpha')e^{-C}r = \frac{1}{1+He^C} + \frac{Hr}{1+He^C} = \frac{1+Hr}{1+He^C},
\end{equation*}
and since iterations are independent, the number of iterations $K$ before termination is geometric with this probability, establishing the first claim.

For the second claim, we reframe the setting as an optional stopping problem. Let $N_i$ denote the number of score queries at iteration $i$ where we set $N_i = 0$ if a rejection is returned at that iteration. With this, we obtain that the total number of score queries is given by $\E \bigg [ \sum_{i=1}^K N_i \bigg ]$.

For $i > K$, we continue to generate $N_i$ as if the algorithm has not terminated so that $(N_i)_{i=1}^\infty$ is an i.i.d. sequence. Let $\mathcal{F}_n$ denote the sigma-algebra generated by all randomness in iterations $1, \ldots, n$. From this, we see that $(N_i)_{i=1}^\infty$ is adapted to the filtration $(\mathcal{F}_i)_{i=1}^\infty$ and $K$ is a stopping time. With this, we describe the total number of queries as an optionally stopped sum over an i.i.d. process, and hence we can apply Wald's lemma. Provided $\E[N_1], \E[K] < \infty$, this lemma gives,
\begin{equation*}
    \E \bigg [ \sum_{i=1}^K N_i \bigg ] = \E[N_1] \E[K].
\end{equation*}
Since $N_1$ takes the value $0$ with probability $\alpha'$ and is $\mathrm{Poisson}(2C)$ distributed with probability $1-\alpha'$, we obtain the expectation
\begin{equation*}
    \E[N_1] = 2 (1 - \alpha') C = \frac{2 C H e^C}{1 + H e^C}.
\end{equation*}
Since $K$ is geometrically distributed with parameter $(1+Hr)/(1+He^C)$, we also obtain,
\begin{equation*}
    \E[K] = \frac{1+He^C}{1+Hr}.
\end{equation*}
Thus, we obtain the expected number of queries from the product,
\begin{equation*}
    \E \bigg [ \sum_{i=1}^K N_i \bigg ] = \frac{2 C H e^C}{1 + H e^C} \cdot \frac{1+He^C}{1+Hr} = \frac{2 C H e^C}{1+Hr}.
\end{equation*}
\qed

\section{Results for Newton--Cotes quadrature rules}
\label{app:error_bounds}
\subsection{Other common quadrature rules}
\begin{example}[Trapezoidal Rule]
\label{example:trapezoidal_rule}
\normalfont
Setting $N=1$ and $w_0=w_1=\frac{1}{2}$ yields the estimate
\begin{align*}
    \hat{I}_{\Delta}^{(2)} = \frac{1}{2} \langle s(\mathbf{x}, t) + s(\tilde{\mathbf{x}},t), \tilde{\mathbf{x}} - \mathbf{x} \rangle.
\end{align*}
This has truncation error $-\frac{1}{12} f^{\prime \prime}(u^{*})$ for some $u^{*} \in [0,1]$, and is of order $\mathcal{O}(h^{3/2})$.
\end{example}
\begin{example}[Simpson's 3/8 rule]
\label{example:simpsons_38_rule}
\normalfont
Setting $N=3$, $w_0=w_3=\frac{1}{8}$ and $w_1=w_2=\frac{3}{8}$ yields the estimate
\begin{align*}
    \hat{I}_{\Delta}^{(4)} = \frac{1}{8} \langle s(\mathbf{x}, t) + 3s(\mathbf{y}, t) + 3s(\mathbf{z}, t) + s(\tilde{\mathbf{x}},t), \tilde{\mathbf{x}} - \mathbf{x} \rangle, \quad \mathbf{y} = \mathbf{x} + \frac{1}{3}(\tilde{\mathbf{x}} - \mathbf{x}), \quad \mathbf{z} = \mathbf{x} + \frac{2}{3}(\tilde{\mathbf{x}} - \mathbf{x}).
\end{align*}
This has truncation error $-\frac{1}{6480} f^{(4)}(u^{*})$ for some $u^{*} \in [0,1]$, and is of order $\mathcal{O}(h^{5/2})$.
\end{example}

\subsection{Derivation of error bounds from truncation error}
Let $\mathbf{v} = \tilde{\mathbf{x}} - \mathbf{x}$. The results in this section are based on the following lemma.
\begin{lemma}
\label{lemma:proposal_distance_bound}
Let $\tilde{\mathbf{x}} = \mathbf{x} + \frac{h}{2} s(\mathbf{x}, t) + \sqrt{h} \mathbf{z}$ where $\mathbf{z} \sim \mathcal{N}(\mathbf{0}, \mathbf{I})$. Suppose $s(\mathbf{x}, t)$ is locally bounded. Then $\|\tilde{\mathbf{x}} -\mathbf{x}\|$ has $\mathcal{O}(h^{1/2})$.
\end{lemma}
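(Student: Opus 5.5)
Write $\mathbf{v} := \tilde{\mathbf{x}} - \mathbf{x} = \frac{h}{2} s(\mathbf{x},t) + \sqrt{h}\,\mathbf{z}$. The plan is to bound the two terms separately with the triangle inequality and show that the noise term dominates for small $h$. The statement should be read with $\mathbf{x}$ ranging over a fixed compact set, or at a fixed $\mathbf{x}$, and $h \in (0,1]$. The $\mathcal{O}(h^{1/2})$ is meant in probability, equivalently in $L^p$, because $\mathbf{z}$ is unbounded; I would state this interpretation explicitly.

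\textbf{Deterministic term.} Local boundedness of $s(\cdot,t)$ gives, for $\mathbf{x}$ in a compact set $K$, a constant $M_K$ with $\|s(\mathbf{x},t)\| \le M_K$. Then $\|\tfrac{h}{2}s(\mathbf{x},t)\| \le \tfrac{M_K}{2} h \le \tfrac{M_K}{2} h^{1/2}$ for $h\le 1$.

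\textbf{Stochastic term.} We have $\|\sqrt{h}\,\mathbf{z}\| = h^{1/2}\|\mathbf{z}\|$, and $\|\mathbf{z}\|$ is a $\chi_d$ variable that does not depend on $h$. Hence $\mathbb{E}\|\mathbf{z}\|^p < \infty$ for every $p$; in particular $\mathbb{E}\|\mathbf{z}\|^2 = d$. It is also tight: for every $\varepsilon>0$ there is $c_\varepsilon$ with $\mathbb{P}(\|\mathbf{z}\| > c_\varepsilon)<\varepsilon$.

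\textbf{Combining.} Putting the two bounds together gives
\begin{equation*}
\|\tilde{\mathbf{x}}-\mathbf{x}\| \le h^{1/2}\big(\tfrac{M_K}{2} + \|\mathbf{z}\|\big).
\end{equation*}
The bracket is a random variable whose law does not depend on $h$. This yields both forms of the bound: $\big(\mathbb{E}\|\tilde{\mathbf{x}}-\mathbf{x}\|^p\big)^{1/p} \le h^{1/2}\big(\tfrac{M_K}{2} + (\mathbb{E}\|\mathbf{z}\|^p)^{1/p}\big)$, and $\|\tilde{\mathbf{x}}-\mathbf{x}\| = \mathcal{O}_{\mathbb{P}}(h^{1/2})$. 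With $p=2$ the constant is of order $\sqrt{d}$, matching the $O((hd)^{1/2})$ heuristic in the main text.

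\textbf{Main obstacle.} The only delicate point is the meaning of $\mathcal{O}$ for a Gaussian-driven quantity, which cannot be bounded almost surely. Once the statement is read in $L^p$ or in probability, everything else is a one-line triangle inequality.
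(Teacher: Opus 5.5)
Your proposal is correct and follows the paper's own argument: the triangle inequality splits $\|\tilde{\mathbf{x}}-\mathbf{x}\|$ into an $\mathcal{O}(h)$ drift term, controlled by local boundedness of $s$, and the dominant $\sqrt{h}\,\|\mathbf{z}\|$ noise term. Your explicit choices ($h\le 1$, $\mathbf{x}$ in a compact set, and $\mathcal{O}$ read in $L^p$ or in probability) make precise what the paper leaves implicit, and your pointwise bound $\|\tilde{\mathbf{x}}-\mathbf{x}\|\le h^{1/2}(M_K/2+\|\mathbf{z}\|)$ also covers the paper's implicit pathwise reading, with $\|\mathbf{z}\|$ acting as a random constant.
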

\begin{proof}
By triangle inequality, we have 
\begin{align*}
    \|\mathbf{v}\| = \|\tilde{\mathbf{x}} - \mathbf{x}\| \leq  \frac{h}{2} \|s(\mathbf{x}, t)\| + \sqrt{h} \|\mathbf{z}\|.
\end{align*}
Since $s(\mathbf{x}, t)$ is locally bounded, the first term on the RHS of the inequality has order $\mathcal{O}(h)$. The second term on the RHS of the inequality has order $\mathcal{O}(h^{1/2})$. Since $h^{1/2}$ dominates $h$ as $h \to 0$, it follows that $\|\mathbf{v}\|$ has order $\mathcal{O}(h^{1/2})$.
\end{proof}

Recall from Section \ref{sec:quadrature_approximations} that  
\begin{align*}
    f:[0,1] \to \R, \quad f(u) = \big \langle s \big ( \mathbf{x} + u\mathbf{v}, t \big ), \mathbf{v} \big \rangle
\end{align*}
where $f$ is the integrand of Equation \eqref{eq:exact_i}. Truncation errors from the Newton--Cotes quadrature rules introduced in Section \ref{sec:quadrature_approximations} are expressed by derivatives of $f$. More specifically, they can be expressed by Fréchet derivatives of $s$ evaluated at $\mathbf{x} + u^{*} \mathbf{v}$. The $n$-th Fréchet derivative $D^ns(\mathbf{x}) : \underbrace{\mathbb{R}^d \times \dots \times \mathbb{R}^d}_{\text{$n$ times}} \to \mathbb{R}^d$ is a multi-linear map takes $n$ arguments from $\mathbb{R}^d$ and outputs a value from $\mathbb{R}^d$.

\paragraph{Trapezoidal rule.}
Recall from Example \ref{example:trapezoidal_rule} that the truncation error of the trapezoidal rule is $-\frac{1}{12} f^{\prime \prime}(u^{*})$ for some $u^{*} \in [0,1]$. This gives us
\begin{align*}
    |f^{\prime \prime}(u^{*})| &= |\langle D^2s(\mathbf{x} + u^{*} \mathbf{v})[\mathbf{v}, \mathbf{v}], \mathbf{v} \rangle |\\
    &\leq \|D^2s(\mathbf{x} + u^{*} \mathbf{v})[\mathbf{v}, \mathbf{v}] \| \|\mathbf{v}\|\\
    &\leq \|D^2s(\mathbf{x} + u^{*} \mathbf{v})\|_{\text{op}} \|\mathbf{v}\|^{3}
\end{align*}
where the first inequality follows from Cauchy-Schwarz and the second follows from the definition of operator norms for multi-linear maps. By Lemma \ref{lemma:proposal_distance_bound}, it follows that $|f^{\prime \prime}(u^{*})|$  has $\mathcal{O}(h^{3/2})$.

\paragraph{Simpson's 1/3 and 3/8 rules.}
Recall from Examples \ref{example:simpsons_13_rule} and \ref{example:simpsons_38_rule} that the truncation error of both Simpson's 1/3 and 3/8 rules are proportional $f^{(4)}(u^{*})$ for some $u^{*} \in [0,1]$. Following the same arguments as for the trapezoidal rule, this gives us
\begin{align*}
    |f^{(4)}(u^{*})| &= |\langle D^4s(\mathbf{x} + u^{*} \mathbf{v})[\mathbf{v}, \mathbf{v}, \mathbf{v}, \mathbf{v}], \mathbf{v} \rangle |\\
    &\leq \|D^4s(\mathbf{x} + u^{*} \mathbf{v})[\mathbf{v}, \mathbf{v}, \mathbf{v}, \mathbf{v}] \| \|\mathbf{v}\|\\
    &\leq \|D^4s(\mathbf{x} + u^{*} \mathbf{v})\|_{\text{op}} \|\mathbf{v}\|^{5}.
\end{align*}
By Lemma \ref{lemma:proposal_distance_bound}, it follows that $|f^{(4)}(u^{*})|$  has $\mathcal{O}(h^{5/2})$.

\section{Choosing the quantity \texorpdfstring{$C$}{C}}\label{appendix:implementation_considerations_c}
The validity of Algorithm \ref{algorithm:exact_two_coin} rests on a valid estimate of $C(\mathbf{x}, \widetilde{\mathbf{x}})$ and, as shown in Proposition \ref{prop:two_coin_cost}, the computational cost depends on the tightness of this quantity. Choosing a suitable $C(\mathbf{x}, \tilde{\mathbf{x}})$ requires some additional knowledge about the score function as it bounds the score function along some, typically small, line segment. Thus, we require some predictability of how the score function changes along this line segment via smoothness or boundedness of the score network.

\paragraph{Bounded data support.}
Suppose that the data distribution is known to be supported on the centred ball of radius $b$. By Tweedie's identity, the score admits the form,
\begin{equation*}
    \nabla \log p_t = \frac{r_t\, d(\mathbf{x}, t) - \mathbf{x}}{r_t^2 \sigma_t^2}, \qquad d(\mathbf{x}, t) := \E[X_0|X_t=\mathbf{x}].
\end{equation*}
Thus, the score function can be controlled using the fact that $\|d(\mathbf{x}, t)\| \leq b$. Similarly, given a generic score function $s(\mathbf{x}, t)$ that is, for example, derived from a trained neural network, it may be parameterised such that the associated denoising function $\hat{d}(\mathbf{x}, t)$ is appropriately compact. Indeed, if we assume that $\|\hat{d}(\cdot, t)\| \leq b$ for some $b > 0$, it follows that for any $\mathbf{x}, \widetilde{\mathbf{x}} \in \R^d$ and $u \in [0, 1]$,
\begin{align*}
    \left|\left\langle
    s(\mathbf{x}+u(\widetilde{\mathbf{x}}-\mathbf{x}),t),
    \widetilde{\mathbf{x}}-\mathbf{x}
    \right\rangle\right| &= \frac{1}{r_t^2 \sigma_t^2} |r_t \langle d(\mathbf{x}+u(\widetilde{\mathbf{x}}-\mathbf{x}),t),
    \widetilde{\mathbf{x}}-\mathbf{x} \rangle - \langle \mathbf{x}+u(\widetilde{\mathbf{x}}-\mathbf{x}),
    \widetilde{\mathbf{x}}-\mathbf{x} \rangle|\\
    &\leq \frac{1}{r_t \sigma_t^2} \| d(\mathbf{x}+u(\widetilde{\mathbf{x}}-\mathbf{x}),t)\| \|\widetilde{\mathbf{x}}-\mathbf{x} \| + \frac{1}{r_t^2 \sigma_t^2} |\langle \mathbf{x}+u(\widetilde{\mathbf{x}}-\mathbf{x}),
    \widetilde{\mathbf{x}}-\mathbf{x} \rangle|\\
    &\leq \frac{br_t + \max\{\| \mathbf{x}\|, \|\widetilde{\mathbf{x}}\|\}}{r_t^2 \sigma_t^2} \|\widetilde{\mathbf{x}}-\mathbf{x} \|.
\end{align*}
Thus, setting $C(\mathbf{x}, \widetilde{\mathbf{x}})$ to be this value on the right-hand side verifies Assumption \ref{ass:score_bound}.

\paragraph{Lipschitz score.}
The previous approach relied on boundedness in the score. Here we utilise first-order smoothness, by supposing that the score function $s(\mathbf{x}, t)$ is Lipschitz in $\mathbf{x}$. This means that there exists some constant $L \geq 0$ such that,
\begin{equation*}
    \|s(\mathbf{x}, t) - s(\widetilde{\mathbf{x}}, t)\| \leq L \|\mathbf{x} - \widetilde{\mathbf{x}}\|,
\end{equation*}
for all $\mathbf{x}, \widetilde{\mathbf{x}}$. Thus, whenever $u \in [0, 0.5]$, we have that,
\begin{align*}
    \left|\left\langle
    s(\mathbf{x}+u(\widetilde{\mathbf{x}}-\mathbf{x}),t),
    \widetilde{\mathbf{x}}-\mathbf{x}
    \right\rangle\right| &\leq \|s(\mathbf{x}+u(\widetilde{\mathbf{x}}-\mathbf{x}), t)\| \|\widetilde{\mathbf{x}}-\mathbf{x}\|\\
    &\leq \|s(\mathbf{x}, t)\| \|\widetilde{\mathbf{x}}-\mathbf{x}\| + \|s(\mathbf{x}+u(\widetilde{\mathbf{x}}-\mathbf{x}), t) - s(\mathbf{x}, t)\| \|\widetilde{\mathbf{x}}-\mathbf{x}\|\\
    &\leq \|s(\mathbf{x}, t)\| \|\widetilde{\mathbf{x}}-\mathbf{x}\| + \frac{L}{2} \|\widetilde{\mathbf{x}}-\mathbf{x}\|^2.
\end{align*}
Similarly, for $u \in [0.5, 1]$, we obtain,
\begin{align*}
    \left|\left\langle
    s(\mathbf{x}+u(\widetilde{\mathbf{x}}-\mathbf{x}),t),
    \widetilde{\mathbf{x}}-\mathbf{x}
    \right\rangle\right| &\leq \|s(\widetilde{\mathbf{x}}, t)\| \|\widetilde{\mathbf{x}}-\mathbf{x}\| + \frac{L}{2} \|\widetilde{\mathbf{x}}-\mathbf{x}\|^2.
\end{align*}
Thus, Assumption \ref{ass:score_bound} holds with constant,
\begin{equation*}
    C(\mathbf{x}, \widetilde{\mathbf{x}}) = \max\{\|s(\mathbf{x}, t)\|, \|s(\widetilde{\mathbf{x}}, t)\|\} \|\widetilde{\mathbf{x}}-\mathbf{x}\| + \frac{L}{2} \|\widetilde{\mathbf{x}}-\mathbf{x}\|^2.
\end{equation*}
Note that in fact, this proof only requires the weaker one-sided Lipschitz condition given by,
\begin{equation*}
    |\langle s(\mathbf{x}, t) - s(\widetilde{\mathbf{x}}, t), \mathbf{x} - \widetilde{\mathbf{x}}\rangle| \leq L \|\mathbf{x} - \widetilde{\mathbf{x}}\|^2.
\end{equation*}

\section{Practical termination}
\label{sec:madm_hybrid}
\begin{wrapfigure}{r}{0.45\textwidth}
\vspace{-1.5em}
\centering
\begin{minipage}{0.4\textwidth}
\small
\hrule
\vspace{0.4em}
\captionof{algorithm}{Hybrid MADM algorithm}
\label{algorithm:combined_accept_reject}
\vspace{0.2em}
\hrule
\vspace{0.4em}
\begin{algorithmic}
    \State {\bfseries Input:} $\mathbf{x}$, $\widetilde{\mathbf{x}}$, $s(\cdot, t)$, $h$, $C(\cdot, \cdot)$.
    \State Set $\alpha' = (1 + H_t(\mathbf{x}, \widetilde{\mathbf{x}}) \exp(C(\mathbf{x}, \widetilde{\mathbf{x}})))^{-1}$
    \For{$k=0,\ldots,K-1$}
        \If{$U \leq \alpha'~\text{for}~U \sim \mathcal{U}([0,1])$}
            \State \Return REJECT.
        \EndIf
        \State Compute $W$ as in Alg. \ref{algorithm:exact_two_coin}
        \If{$U \leq W~\text{for}~U \sim \mathcal{U}([0,1])$}
            \State \Return ACCEPT.
        \EndIf
    \EndFor
    \State \Return Alg.~\ref{algorithm:simpsons_adjustment}
\end{algorithmic}
\vspace{0.4em}
\hrule
\end{minipage}
\end{wrapfigure}
Algorithm \ref{algorithm:exact_two_coin} gives an exact method and avoids discretisation bias, but may require many score evaluations. Algorithm \ref{algorithm:simpsons_adjustment} is biased since it perturbs the acceptance probability, but it is computationally much cheaper. To get the best of both worlds, we propose a hybrid sampling algorithm: attempt Algorithm \ref{algorithm:exact_two_coin} $K$ times to get an accept or reject decision; use Algorithm \ref{algorithm:simpsons_adjustment} if this fails. This controls the worst-case computational cost while retaining an exact update whenever the Bernoulli factory terminates in time. In our experiments, we used this hybrid sampling strategy for synthetic 2D datasets, where it is feasible, and showed that on image datasets, using only Algorithm \ref{algorithm:simpsons_adjustment} still yields performance gains.

\section{Optimal scaling}\label{app:proof_optimal_scaling}


Optimal scaling theory addresses the choice of step size $h$ to achieve efficient exploration of the target: if proposed moves are too small, the chain mixes slowly, whereas overly large proposals lead to frequent rejections and poor exploration. The goal is to strike an optimal balance by selecting a proposal step size $h$ that maximizes sampling efficiency. We establish the optimal scaling of $h$ in Algorithm~\ref{algorithm:exact_two_coin} by maximizing the Expected Squared Jumping Distance (ESJD) in the stationary phase:
\[
\max_h\text{ESJD}(h):=\max_h\E_{\mathbf{x}_t^{(i)}\sim p_t}[\|\mathbf{x}_t^{(i+1)}-\mathbf{x}_t^{(i)}\|^2].
\]
The justification for using ESJD as an optimization criterion comes from its connection to diffusion limits \citep{roberts97}. When weak convergence to a diffusion limit is established, the ESJD of the Markov chain converges to the quadratic variation of the limiting diffusion \citep{roberts01,Yang2020}. A larger ESJD generally indicates that the chain is exploring the state space more efficiently, avoiding both overly conservative moves and excessive rejection.

We establish the optimal scaling theory for the Barker-adjusted Langevin algorithm in Algorithm~\ref{algorithm:exact_two_coin}. For clarity, we present the result for multivariate Gaussian targets. The argument extends in a standard way to broader classes of target distributions \citep{Yang2020}, with the same limiting optimal acceptance rate.

\begin{proposition}\label{thm_optimal_scaling}
   Assume that \(p_t\) is a \(d\)-dimensional standard multivariate Gaussian and that the score is exact, i.e., \(s(\cdot,t)=\nabla \log p_t(\cdot)\). Choosing \(h\) to maximize the ESJD for Algorithm~\ref{algorithm:exact_two_coin}, then the optimal scaling for $h$ is $h=h_d\sim d^{-1/3}$, and the optimal expected acceptance rate converges to \emph{0.347} (to three decimal places) as \(d \to \infty\).
\end{proposition}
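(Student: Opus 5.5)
By Theorem~\ref{thm:two_coin_barker}, Algorithm~\ref{algorithm:exact_two_coin} accepts with probability exactly $\alpha_{\mathrm B}(\mathbf{x},\widetilde{\mathbf{x}}) = Hr/(1+Hr)$. The ESJD therefore depends only on the law of the Barker-adjusted Langevin chain and not on the Bernoulli factory internals. This reduces the statement to an optimal scaling result for Barker-MALA on $\mathcal{N}(\mathbf{0},\mathbf{I}_d)$. Write $\log(Hr)=: \Lambda_d(\mathbf{x},\mathbf{z})$. Then
\[
\mathrm{ESJD}(h)=\E\big[\|\widetilde{\mathbf{x}}-\mathbf{x}\|^2\, a_{\mathrm B}(e^{\Lambda_d})\big], \qquad a_{\mathrm B}(R)=R/(1+R),
\]
with $\mathbf{x}\sim\mathcal{N}(\mathbf{0},\mathbf{I})$ and $\widetilde{\mathbf{x}}=(1-h/2)\mathbf{x}+\sqrt{h}\,\mathbf{z}$.

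\textbf{Step 1: compute the log acceptance ratio.} For the Gaussian target everything is explicit: $\log r=-\tfrac12(\|\widetilde{\mathbf{x}}\|^2-\|\mathbf{x}\|^2)$, and $\log H$ is the difference of the two quadratic forms given in Section~\ref{sec:lineintegral}. Expanding in $h$ gives $\Lambda_d$ as a sum over coordinates of i.i.d.\ terms of the form $\frac{h^{3}}{c}\,(\text{quadratic polynomial in } x_i,z_i)+O(h^2)\cdot(\dots)$. The aim is to show that under $h=\ell^2 d^{-1/3}$ we have $\Lambda_d \Rightarrow \mathcal{N}(-\sigma^2/2,\sigma^2)$ with $\sigma^2=\kappa\ell^6$ for an explicit constant $\kappa$ ($\kappa=1/32$ for MALA-type proposals on Gaussians, which I would verify).

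The mean $-\sigma^2/2$ is forced by the identity $\E[e^{\Lambda_d}]=1$ under stationarity. That identity holds because $\E[Hr]=\int q(\mathbf{x}\mid\widetilde{\mathbf{x}})p(\widetilde{\mathbf{x}})\,\rmd\mathbf{x}\,\rmd\widetilde{\mathbf{x}}=1$. Combined with asymptotic normality of $\Lambda_d$, it pins the mean once the variance is known. I expect the main obstacle to be the careful cancellation showing the leading terms of orders $h^{1/2}$, $h$ and $h^{3/2}$ have vanishing mean and variance contribution at the relevant scale, so that the per-coordinate variance is exactly order $h^3$. The CLT then needs $dh^3=O(1)$, which identifies the scaling $h\sim d^{-1/3}$.

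\textbf{Step 2: limit of the ESJD.} Since $\|\widetilde{\mathbf{x}}-\mathbf{x}\|^2/(dh)\to 1$ in $L^1$ by the law of large numbers and is asymptotically independent of $\Lambda_d$, we get
\[
\frac{\mathrm{ESJD}(h)}{d^{2/3}} \to \ell^2\, \E\big[a_{\mathrm B}(e^{\sigma Z-\sigma^2/2})\big], \qquad Z\sim\mathcal N(0,1).
\]
The asymptotic independence holds because the correlation between the squared jump and $\Lambda_d$ comes from $O(1/\sqrt d)$-fraction terms; a bounded-convergence argument finishes since $a_{\mathrm B}\le 1$. This also shows that choosing $h$ on a scale other than $d^{-1/3}$ gives a smaller order of ESJD: for $h\gg d^{-1/3}$, $\sigma\to\infty$ and acceptance vanishes faster than $h$ grows (by a Gaussian tail bound), while for $h\ll d^{-1/3}$ the ESJD is $o(d^{2/3})$.

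\textbf{Step 3: optimise.} The limiting acceptance rate is $A(\sigma)=\E[a_{\mathrm B}(e^{\sigma Z-\sigma^2/2})]$, and we maximise $\ell^2A(\sigma)\propto \sigma^{2/3}A(\sigma)$ over $\sigma$. Setting the derivative to zero gives a one-dimensional equation, $\tfrac13 A(\sigma)+\sigma A'(\sigma)=0$. I would solve it numerically by Gauss--Hermite quadrature, as in \citep{Yang2020}, and evaluate $A(\sigma^\ast)\approx0.347$. The constant $\kappa$ cancels, so the optimal acceptance rate is universal.
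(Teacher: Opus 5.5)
Your route is the paper's. Theorem~\ref{thm:two_coin_barker} reduces Algorithm~\ref{algorithm:exact_two_coin} to Barker-adjusted MALA; a CLT gives $\Lambda_d\Rightarrow\mathcal N(-\sigma^2/2,\sigma^2)$ under $h=\ell^2d^{-1/3}$; the scaled ESJD converges to $\ell^2A$; and the optimum is found numerically.

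\textbf{The error: the first-order condition in Step~3.} Since $\sigma^2=\kappa\ell^6$ gives $\ell^2\propto\sigma^{2/3}$, maximising $\sigma^{2/3}A(\sigma)$ requires $\tfrac23A(\sigma)+\sigma A'(\sigma)=0$. Your equation $\tfrac13A(\sigma)+\sigma A'(\sigma)=0$ is instead the first-order condition for $\sigma^{1/3}A(\sigma)$, i.e.\ for $\ell A$ rather than $\ell^2A$. Solving it literally gives $\sigma\approx0.9$ and an acceptance rate of roughly $0.42$, not $0.347$. With the correct condition one gets $\sigma^\ast\approx1.29$ and $A(\sigma^\ast)\approx0.347$.

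\textbf{Smaller corrections to Step~1.}
\begin{itemize}
\item No delicate cancellation is needed for the Gaussian target. Exactly, $\log H=(\tfrac12-\tfrac h8)(\|\widetilde{\mathbf{x}}\|^2-\|\mathbf{x}\|^2)$, so $\Lambda_d=\tfrac h8\sum_i(x_i^2-\widetilde{x}_i^2)$. Its summand has leading part $-\tfrac{h^{3/2}}{4}x_iz_i$.
\item So the $h^{3/2}$ term does not drop out. It has mean zero but carries the whole leading variance $h^3/16$, while the per-coordinate mean is $-h^3/32$. Hence $\sigma^2=\ell^6/16$, i.e.\ $\kappa=1/16$; the $1/32$ is the size of the mean. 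This is harmless, since $\kappa$ cancels.
\item $\mathbb{E}[e^{\Lambda_d}]=1$ together with weak convergence only yields $\mu\le-\sigma^2/2$, by Fatou. Pinning the mean this way would need uniform integrability of $e^{\Lambda_d}$. You do not need it: read the mean off the CLT computation, as the paper does.
\end{itemize}

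\textbf{Step~2 is a mild simplification of the paper's.} Because $\|\widetilde{\mathbf{x}}-\mathbf{x}\|^2/(dh)\to1$ in $L^1$ and $a_{\mathrm B}\le1$, the difference $\mathrm{ESJD}/(dh)-\mathbb{E}[a_{\mathrm B}(e^{\Lambda_d})]$ tends to zero directly. No asymptotic-independence argument is needed. The paper instead reduces to the first coordinate and uses Slutsky on $(z_1,W_d)$ plus uniform integrability.
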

Note that if $h$ scales with $d$ faster (or slower) than order $d^{-1/3}$, the limiting acceptance probability becomes $1/2$ (or $0$), which cannot be optimal for maximizing the ESJD. Therefore, we will write $h=h_d=\ell^2d^{-1/3}$ for some $\ell>0$. The proof follows the same line of argument as in \cite{roberts98}, with the acceptance function modified according to \cite{agrawal2023optimal}. 

First, for notational clarity, we suppress the time index in the target $p_t$. By assumption,
\[
p_t(\mathbf{x}) = \prod_{i=1}^d \phi(x_i), \qquad \text{where } \phi(x_i) = (2\pi)^{-1/2} \exp(-x_i^2/2).
\]
Under a perfect score, the resulting exact algorithm reduces to the Barker-adjusted Langevin algorithm: proposals are generated via ULA and accepted according to Barker's rule. For consistency in notation, we also suppress the subscript $B$ in the acceptance probability $\alpha_B$ and write $\alpha_d$ to highlight its dependence on the dimension. That is,
\begin{equation}
    \alpha_d(\mathbf{x}, \tilde{\mathbf{x}}) = \frac{1}{1 + \exp(-W_d)}
\end{equation}
where $W_d = \log\left(\frac{p_t(\tilde{\mathbf{x}})q(\mathbf{x}\mid\tilde{\mathbf{x}})}{p_t(\mathbf{x})q(\tilde{\mathbf{x}}\mid \mathbf{x})}\right)$ is the log-acceptance ratio, and the ULA proposal
\begin{equation}
    \tilde{x}_i = x_i - \frac{h}{2} x_i + \sqrt{h} z_i, \quad i = 1, \dots, d,
\end{equation}
where $x_i, z_i \stackrel{iid}{\sim} \mathcal{N}(0, 1)$ in stationarity. Note that while the assumption of a multivariate Gaussian target simplifies the analysis of ULA proposal, the argument extends readily to more general settings \citep{Yang2020}. We restrict attention to the Gaussian case only for simplicity.

We are interested in maximizing the ESJD, with step size scales with dimension as $h = \ell^2 d^{-1/3}$ for a constant $\ell > 0$ to yield a non-degenerate limit:
\begin{equation}
    \textrm{ESJD}_d(h) = \E[\|\mathbf{x}-\tilde{\mathbf{x}}\|^2\alpha_d(\mathbf{x},\tilde{\mathbf{x}})]= d\cdot \E\left[ (\tilde{x}_1 - x_1)^2 \alpha_d(\mathbf{x}, \tilde{\mathbf{x}}) \right]
\end{equation}
As we re-parameterize $h$ by $\ell$ via $h=\ell^2 d^{-1/3}$, we can define an efficiency function over $\ell$:
\begin{equation}
    J_d(\ell) := d^{-2/3} \cdot \text{ESJD}_d(h).
\end{equation}
Then maximizing $\text{ESJD}(h)$ over $h$ is equivalent to maximizing $J_d(\ell)$ over $\ell$ up to a constant $d^{-2/3}$.

Next, we establish the limiting distribution of $W_d$. As shown in standard algebraic expansions for Gaussian targets, the log-ratio simplifies perfectly to:
\begin{equation}
    W_d = \sum_{i=1}^d \xi_{i,d}, \quad \text{where} \quad \xi_{i,d} = \frac{h}{8} (x_i^2 - \tilde{x}_i^2)
\end{equation}

\begin{lemma}
As $d \to \infty$, $W_d \xrightarrow{\mathcal{D}} \mathcal{N}\left(-\frac{\ell^6}{32}, \frac{\ell^6}{16}\right)$.
\end{lemma}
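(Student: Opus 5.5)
The plan is a direct Lyapunov central limit theorem for the triangular array $\{\xi_{i,d}\}_{i=1}^d$, using the exact representation $W_d=\sum_{i=1}^d \xi_{i,d}$ with $\xi_{i,d}=\tfrac{h}{8}(x_i^2-\tilde x_i^2)$ stated just above. For each fixed $d$ the summands are i.i.d. because $x_i,z_i\stackrel{iid}{\sim}\mathcal N(0,1)$ in stationarity and the proposal acts coordinatewise. It therefore suffices to do three things: compute $d\,\E[\xi_{1,d}]$, compute $d\,\mathrm{Var}(\xi_{1,d})$, and check a Lyapunov condition, all with $h=\ell^2 d^{-1/3}$, so that $dh^3=\ell^6$.

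\textbf{Step 1 (mean).} Substitute $\tilde x_1=(1-\tfrac h2)x_1+\sqrt h z_1$. Then $\E[\tilde x_1^2]=(1-\tfrac h2)^2+h=1+\tfrac{h^2}{4}$, so $\E[x_1^2-\tilde x_1^2]=-\tfrac{h^2}{4}$ exactly. Hence
\[
d\,\E[\xi_{1,d}]=-\tfrac{dh^3}{32}=-\tfrac{\ell^6}{32}
\]
for every $d$.

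\textbf{Step 2 (variance).} Expand
\[
x_1^2-\tilde x_1^2=\bigl(h-\tfrac{h^2}{4}\bigr)x_1^2-2\bigl(1-\tfrac h2\bigr)\sqrt h\,x_1z_1-h z_1^2 .
\]
The cross term has variance $4h(1-\tfrac h2)^2=4h+O(h^2)$ and is uncorrelated with the $x_1^2$ and $z_1^2$ terms, since it is odd in $z_1$. The remaining part $h(x_1^2-z_1^2)-\tfrac{h^2}{4}x_1^2$ has variance $O(h^2)$. Thus $\mathrm{Var}(x_1^2-\tilde x_1^2)=4h+O(h^2)$, and
\[
d\,\mathrm{Var}(\xi_{1,d})=\tfrac{d h^2}{64}\bigl(4h+O(h^2)\bigr)=\tfrac{\ell^6}{16}+O(d h^4).
\]
Since $dh^4=\ell^8 d^{-1/3}\to0$, the variance of $W_d$ converges to $\ell^6/16$.

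\textbf{Step 3 (Lyapunov condition).} Each $\xi_{1,d}-\E\xi_{1,d}$ is $\tfrac h8$ times a polynomial in Gaussians whose coefficients are $O(\sqrt h)$. Hence $\E|\xi_{1,d}-\E\xi_{1,d}|^4=O(h^4\cdot h^2)=O(h^6)$, and
\[
\sum_{i=1}^d \E|\xi_{i,d}-\E\xi_{i,d}|^4=O(dh^6)=O(\ell^{12}d^{-1})\to0 ,
\]
while the total variance stays bounded away from zero. The Lyapunov CLT (with $\delta=2$) for triangular arrays then gives $W_d-\E W_d\xrightarrow{\mathcal D}\mathcal N(0,\ell^6/16)$. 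Combining this with Step 1 by Slutsky yields the claim.

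\textbf{Main obstacle.} The only delicate point is bookkeeping in Step 2. One must check that the cross term $-2\sqrt h\,x_1z_1$ is the sole contributor at order $h$ to the variance of $x_1^2-\tilde x_1^2$, and that every covariance with the $O(h)$ terms vanishes by symmetry rather than contributing at order $h^{3/2}$. Once this is confirmed, the mean is exact and the Lyapunov bound is routine.
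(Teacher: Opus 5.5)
Your proposal is correct and follows essentially the same route as the paper: the exact per-coordinate mean $-h^3/32$, a variance of $h^3/16+O(h^4)$ dominated by the cross term $-\tfrac{h^{3/2}}{4}x_iz_i$, and a fourth-moment Lyapunov check giving $O(dh^6)=O(d^{-1})\to 0$. One small remark: the ``main obstacle'' you flag is not actually delicate, because even without the symmetry argument, Cauchy--Schwarz bounds the cross covariances by $O(h^{3/2})$, and these contribute only $O(dh^{7/2})=O(d^{-1/6})\to 0$ to $\mathrm{Var}(W_d)$.
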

\begin{proof}
The proof follows the same line of arguments as the proof for MALA in \cite{roberts98}. Substituting the ULA proposal into $\xi_{i,d}$, we obtain:
\begin{equation}
    \xi_{i,d} = \frac{h}{8} \left[ hx_i^2 - \frac{h^2}{4}x_i^2 - 2\sqrt{h}\left(1-\frac{h}{2}\right)x_i z_i - hz_i^2 \right].
\end{equation}
Taking expectations using $\mathbb{E}[x_i^2]=1, \mathbb{E}[z_i^2]=1$, and $\mathbb{E}[x_i z_i]=0$:
\begin{equation}
    \mu_d = \mathbb{E}[\xi_{i,d}] = \frac{h}{8} \left( h - \frac{h^2}{4} - h \right) = -\frac{h^3}{32}.
\end{equation}
For the variance, the dominant term is the cross-term $-\frac{h^{3/2}}{4} x_i z_i$. Thus:
\begin{equation}
    \mathrm{Var}(\xi_{i,d}) = \mathbb{E}\left[ \left(-\frac{h^{3/2}}{4} x_i z_i\right)^2 \right] + \mathcal{O}(h^4) = \frac{h^3}{16} + \mathcal{O}(h^4).
\end{equation}
Because $h = \ell^2 d^{-1/3}$, the sum of means is $\mathbb{E}[W_d] = d \left(-\frac{h^3}{32}\right) = -\frac{\ell^6}{32}$. 
The sum of variances is $s_d^2 = \sum_{i=1}^d \mathrm{Var}(\xi_{i,d}) = d\left(\frac{h^3}{16}\right) + d\mathcal{O}(h^4) \to \frac{\ell^6}{16}$. Finally, to establish asymptotic normality, we verify the Lyapunov condition for the fourth moment,
 \[
        \sum_{i=1}^d \E\left[|\xi_{i,d}-\E\xi_{i,d}|^4\right]
        =O(dh^6)=O(d^{-1})\to 0,
    \]
    which completes the proof.
\end{proof}

Next, we establish the scaling limit of ESJD and the expected acceptance rate.
\begin{proposition}[Optimal Scaling Limit]\label{thm_BALA}
As $d \to \infty$, the scaled ESJD converges to:
\begin{equation}
    \lim_{d \to \infty} J_d(\ell) = \ell^2\cdot A(\ell),\quad A(\ell):=\mathbb{E}\left[ \frac{1}{1 + \exp(-W)} \right]
\end{equation}
where $W \sim \mathcal{N}(-\sigma^2/2, \sigma^2)$ and $\sigma^2=\ell^6/16$.
\end{proposition}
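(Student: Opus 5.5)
The plan is to reduce the ESJD to a single coordinate and then decouple that coordinate from the log-acceptance ratio $W_d$ in the limit.

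\textbf{Step 1 (reduction).} By exchangeability of the coordinates under the product Gaussian target, $J_d(\ell)=d^{1/3}\,\mathbb{E}[(\tilde x_1-x_1)^2\alpha_d]$. Since $\tilde x_1-x_1=-\tfrac h2 x_1+\sqrt h z_1$, we can write $d^{1/3}(\tilde x_1-x_1)^2=\ell^2 z_1^2+R_d$, where $R_d=O(\ell^3 d^{-1/6})$ in $L^1$ (the cross term $h^{3/2}x_1z_1$ and the $h^2x_1^2$ term, rescaled). Because $\alpha_d\in[0,1]$, the $R_d$ contribution vanishes. It therefore suffices to show
\[
\mathbb{E}[z_1^2\,\alpha_d]\to A(\ell).
\]

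\textbf{Step 2 (decoupling).} Split $W_d=\xi_{1,d}+W_d^{-}$, with $W_d^{-}=\sum_{i\ge2}\xi_{i,d}$. The summand $\xi_{1,d}$ is $O_{L^2}(h^{3/2})=O(d^{-1/2})$. The map $w\mapsto(1+e^{-w})^{-1}$ is $1/4$-Lipschitz, so
\[
\bigl|\mathbb{E}[z_1^2\alpha_d]-\mathbb{E}[z_1^2(1+e^{-W_d^-})^{-1}]\bigr|\le \tfrac14\,\mathbb{E}[z_1^2|\xi_{1,d}|]\to0,
\]
where the last bound follows from Cauchy--Schwarz. Now $W_d^{-}$ is independent of $z_1$, so the remaining expectation factorizes as $\mathbb{E}[z_1^2]\cdot\mathbb{E}[(1+e^{-W_d^-})^{-1}]=\mathbb{E}[(1+e^{-W_d^-})^{-1}]$.

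\textbf{Step 3 (limit).} The preceding lemma applies verbatim to $W_d^{-}$, since dropping one summand changes neither the limiting mean nor the limiting variance. Hence $W_d^-\xrightarrow{\mathcal D}\mathcal N(-\ell^6/32,\ell^6/16)$. The function $(1+e^{-w})^{-1}$ is bounded and continuous, so weak convergence gives $\mathbb{E}[(1+e^{-W_d^-})^{-1}]\to A(\ell)$, with $\sigma^2=\ell^6/16$. Note that the mean is exactly $-\sigma^2/2$. Combining Steps 1--3 yields $\lim_d J_d(\ell)=\ell^2A(\ell)$. The same argument without the $z_1^2$ factor shows that the expected acceptance rate converges to $A(\ell)$, which is what the subsequent optimization over $\ell$ uses.

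\textbf{Main obstacle.} The delicate step is the decoupling in Step 2. One must check that the error bound is uniform and that the factor $z_1^2$ does not spoil it; finite fourth moments of $x_1$ and $z_1$ suffice. One must also confirm that the lemma's Lyapunov argument still holds for the sum with one term removed. A secondary point is verifying that the $O(h^4)$ corrections to the mean and variance of $\xi_{i,d}$ give $d\,O(h^4)=O(d^{-1/3})\to0$, so that the Gaussian limit has exactly the relation mean $=-\sigma^2/2$.
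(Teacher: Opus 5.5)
Your proof is correct and has the same structure as the paper's. Both reduce to the first coordinate, split off $\xi_{1,d}$ so that the remaining sum $W_{d,-1}$ is independent of $z_1$, and apply the CLT lemma to $W_{d,-1}$. The only difference is how the limit is passed inside the expectation. You use the $1/4$-Lipschitz property of Barker's function together with Cauchy--Schwarz and an exact factorization at finite $d$, which gives explicit $O(d^{-1/6})$ and $O(d^{-1/2})$ error terms. The paper instead uses joint weak convergence of $(z_1, W_d)$ via Slutsky, then the continuous mapping theorem, and a domination bound for uniform integrability.
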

    \begin{proof}
    The proof follows the same line of argument as in \cite{roberts98,agrawal2023optimal}.
Observe that in stationarity, the $d$ components are identically distributed. Thus, the full ESJD is exactly $d$ times the single-coordinate ESJD:
\begin{equation}
    J_d(\ell) = d^{-2/3} \cdot d \cdot \mathbb{E}\left[ (\tilde{x}_1 - x_1)^2 \alpha_d(\mathbf{x}, \tilde{\mathbf{x}}) \right] = d^{1/3} \mathbb{E}\left[ (\tilde{x}_1 - x_1)^2 \alpha_d(\mathbf{x}, \tilde{\mathbf{x}}) \right]
\end{equation}

Write the single-coordinate squared jump distance as $(\tilde{x}_1 - x_1)^2 = h z_1^2 + R_{1,d}$, where $R_{1,d} = \mathcal{O}(h^{3/2})$ almost surely. Multiplying by $d^{1/3}$, we note $d^{1/3} h = \ell^2$. Thus, $d^{1/3} (\tilde{x}_1 - x_1)^2 \to \ell^2 z_1^2$ almost surely. Let $W_{d,-1} = \sum_{i=2}^d \xi_{i,d}$. Because $\xi_{1,d} \xrightarrow{\mathbb{P}} 0$, by Slutsky's Theorem, $W_{d,-1} \xrightarrow{\mathcal{D}} W$. Furthermore, since $W_{d,-1}$ is strictly independent of $z_1$, the joint distribution converges: $(z_1, W_{d,-1}) \xrightarrow{\mathcal{D}} (z_1, W)$, where $z_1 \perp \!\!\! \perp W$. By Slutsky's Theorem again on the pair $(z_1, W_d) = (z_1, W_{d,-1}) + (0, \xi_{1,d})$, we have $(z_1, W_d) \xrightarrow{\mathcal{D}} (z_1, W)$.

Let $g(w) = (1 + \exp(-w))^{-1}$ represent Barker's function. Because $g$ is continuous and bounded, $g(w) \in (0,1)$, the function $f(z, w) = z^2 g(w)$ is continuous. By the Continuous Mapping Theorem:
\begin{equation}
    V_d:=d^{1/3} (\tilde{x}_1 - x_1)^2 \alpha_d(\mathbf{x}, \tilde{\mathbf{x}}) \xrightarrow{\mathcal{D}} \ell^2 z_1^2 g(W).
\end{equation}

To pass the limit inside the expectation, we require the sequence $V_d$ to be uniformly integrable. Indeed, since $0 \le \alpha_d \le 1$, we have $|V_d| \leq d^{1/3} [ \frac{h^2}{4}x_1^2 - h^{3/2}x_1 z_1 + h z_1^2 ] \leq \frac{\ell^4}{4}x_1^2 + \ell^3|x_1 z_1| + \ell^2 z_1^2$. Because $x_1, z_1 \sim \mathcal{N}(0,1)$, the dominating variable is integrable, which implies uniform integrability of $\{V_d\}_{d \ge 1}$.
It follows that:
\begin{equation}
    \lim_{d\to\infty} J_d(\ell) = \lim_{d\to\infty} \mathbb{E}[V_d] = \mathbb{E}\left[ \ell^2 z_1^2 g(W) \right] = \ell^2 \mathbb{E}[z_1^2] \mathbb{E}[g(W)] = \ell^2 \mathbb{E}\left[\frac{1}{1 + \exp(-W)}\right].
\end{equation}
\end{proof}

Finally, Theorem \ref{thm_BALA} allows one to determine the optimal acceptance rate via numerical simulations. See Figure \ref{fig:BALA} for the efficiency curve, $\ell^2\cdot A(\ell)$ versus $A(\ell)$. By choosing the optimal $\ell$, which corresponds to the optimal step size $h$ that maximizes the ESJD, the expected acceptance rate converges to $0.347$ (to three decimal places) as $d\to\infty$.

\begin{figure}
    \centering
    \includegraphics[width=0.5\linewidth]{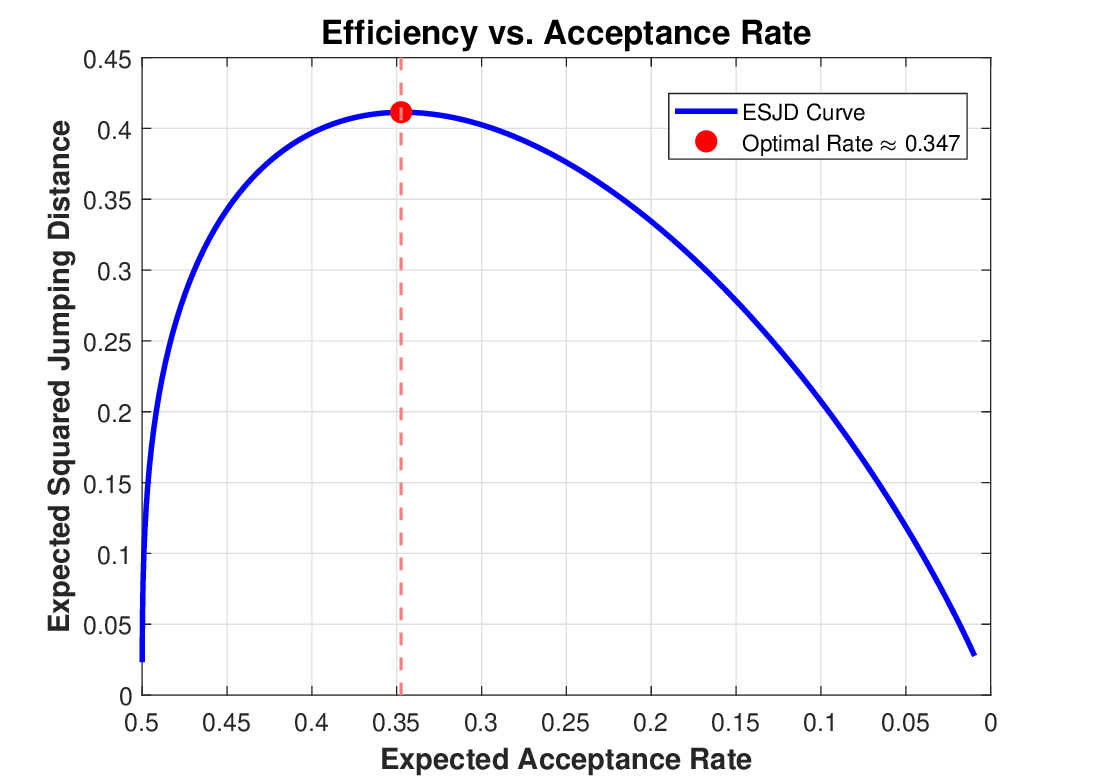}
    \caption{Optimal acceptance rate for Algorithm \ref{algorithm:exact_two_coin}}
    \label{fig:BALA}
\end{figure}

\section{Experiment details}
\label{appendix:experiment_details}
\paragraph{Synthetic 2D Datasets}
In these experiments, we used the ancestral sampler from DDPM \citep{ho2020ddpm} as the predictor. This uses the discretisation
\begin{align*}
    \mathbf{x}_{t} = \frac{1}{\sqrt{1-\beta_{t+1}}}(\mathbf{x}_{t+1} + \beta_{t+1}s_{\phi}(\mathbf{x}_{t+1}, t+1)) + \sqrt{\beta_{t+1}} \mathbf{z}_{i+1}
\end{align*}
where the forward noising process assumes a discrete Markov chain $\{\mathbf{x}_0, \dots, \mathbf{x}_{T}\}$ constructed as $p(\mathbf{x}_{t+1}| \mathbf{x}_{t}) \sim \mathcal{N}(\mathbf{x}_{t+1}; \sqrt{1-\beta_{t}}\mathbf{x}_{t}, \beta_{t}\mathbf{I})$ with $0<\beta_{1}, \dots \beta_{T}<1$. The diffusion sampling hyperparameters for these experiments are listed in Table \ref{table:hyperparameters_2d_datasets}.
\begin{table}[t]
\caption{Diffusion sampling hyperparameters for experiments on synthetic 2D datasets.}
\label{table:hyperparameters_2d_datasets}
\begin{center}
\begin{tabular}{lccc}
\toprule
Dataset & Predictor Steps & MADM steps per Predictor Step & Step Size $h$ at $p_t$\\
\midrule
spiral & 40 & 20 & $10^{-1} \times \beta_{t+1}$ \\
funnel & 10 & 20 & $\beta_{t+1}$\\
sierpinski triangle & 20 & 30 & $10^{-2} \times \beta_{t+1}$\\
pinwheel & 20 & 30 & $10^{-2} \times \beta_{t+1}$\\
\bottomrule
\end{tabular}
\end{center}
\end{table}

\paragraph{Image Datasets}
All experiments were run on a single NVIDIA H100 GPU. In these experiments, we used the probability flow ODE as the predictor by using pre-trained models (config F) from \citep{karras2022edm}. We used the same number of predictor steps (18 for CIFAR-10, 40 for FFHQ and AFHQv2, 256 for ImageNet). We use the default sampling strategy in their codebase, which uses the scale function $s(t)=1$ and schedule $\sigma(t) = t$. This yields a simplified expression for the probability flow ODE (Equation 4 in \citep{karras2022edm}) 
\begin{align*}
    \rmd \mathbf{Y}_{t} = \sigma (t) \nabla_{\mathbf{x}} \log p_{t_{\text{max}}-t}(\mathbf{Y}_{t})\rmd t.
\end{align*}
where the ODE is runs forward in time (i.e. $t$ goes from $0$ to $t_{\text{max}}$). The diffusion sampling hyperparameters for these experiments are listed in Table \ref{table:hyperparameters_image_datasets}.
\begin{table}[t]
\caption{Diffusion sampling hyperparameters for experiments on image datasets. Run times are based on a batch size of 64.}
\label{table:hyperparameters_image_datasets}
\begin{center}
\begin{tabular}{lccccc}
\toprule
Dataset & ODE Solver& Predictor Steps  & MADM Steps & MADM Step Size $h$ at $p_t$ & Run Time\\
\midrule
CIFAR-10 & Heun & 18 & - & - & 3 secs\\
CIFAR-10 & Heun & 18 & 1 & $10^{-2} \times \sigma(t)$ & 3 secs\\
CIFAR-10 & Euler & 18 & - & - & 2 secs\\
CIFAR-10 & Euler & 18 & 1 & $10^{-2} \times \sigma(t)$ & 3 secs\\
FFHQ & Heun & 40 & - & - & 10 secs\\
FFHQ & Heun & 40 & 1 & $10^{-2}  \times\sigma(t)$ & 24 secs\\
FFHQ & Euler & 40 & - & - & 8 secs\\
FFHQ & Euler & 40 & 1 & $10^{-2} \times\sigma(t)$ & 20 secs\\
AFHQv2 & Heun & 40 & - & - & 10 secs\\
AFHQv2 & Heun & 40 & 1 & $10^{-2}  \times \sigma(t)$ & 24 secs\\
AFHQv2 & Euler & 40 & - & - & 8 secs\\
AFHQv2 & Euler & 40 & 1 & $10^{-3} \times \sigma(t)$ & 20 secs\\
ImageNet & Heun & 256 & - & - & 3 mins \\
ImageNet & Heun & 256 & 1 & $10^{-2}  \times \sigma(t)$ & 4 mins \\
ImageNet & Euler & 256 & - & - & 2 mins \\
ImageNet & Euler & 256 & 1 & $10^{-2} \times \sigma(t)$ & 3 mins \\
\bottomrule
\end{tabular}
\end{center}
\end{table}

\section{Licenses}
\label{appendix:licenses}
Codebases:
\begin{itemize}
    \item Elucidating the Design Space of Diffusion-Based Generative Models \citep{karras2022edm}: Attribution-NonCommercial-ShareAlike 4.0 International.
\end{itemize}

Datasets:
\begin{itemize}
    \item CIFAR-10 \citep{krizhevsky2009learning}: MIT license.

    \item FFHQ \citep{karras2019gans}:  Creative Commons BY-NC-SA 4.0 license.

    \item AFHQv2 \citep{choi2026stargan}: MIT license.

    \item ImageNet \citep{deng2009imagenet}: Unknown license.
\end{itemize}


\end{document}